\documentclass{article}

\usepackage{amsmath,amsfonts,bm}

\def\eqref#1{equation~\ref{#1}}

\def\1{\bm{1}}

\def\rvx{{\mathbf{x}}}

\def\rvz{{\mathbf{z}}}

\def\rmM{{\mathbf{M}}}

\def\vb{{\bm{b}}}

\def\vt{{\bm{t}}}

\def\vy{{\bm{y}}}

\DeclareMathAlphabet{\mathsfit}{\encodingdefault}{\sfdefault}{m}{sl}
\SetMathAlphabet{\mathsfit}{bold}{\encodingdefault}{\sfdefault}{bx}{n}

\def\sM{{\mathbb{M}}}

\def\sP{{\mathbb{P}}}
\def\sQ{{\mathbb{Q}}}

\newcommand{\E}{\mathbb{E}}

\newcommand{\KL}{D_{\mathrm{KL}}}

\DeclareMathOperator*{\argmax}{arg\,max}
\DeclareMathOperator*{\argmin}{arg\,min}

\usepackage{microtype}
\usepackage{graphicx}
\usepackage{subcaption}
\usepackage{booktabs} %

\usepackage{hyperref}
\usepackage{algpseudocode}  %
\usepackage{multirow}

\usepackage[accepted]{icml2026}

\usepackage{amsmath}
\usepackage{amssymb}
\usepackage{mathtools}
\usepackage{amsthm}

\newcommand{\err}[2]{%
\begin{tabular}{@{}c@{}}
#1%
{\scriptsize \color{gray}$\pm$#2}
\end{tabular}%
}

\usepackage[capitalize,noabbrev]{cleveref}

\theoremstyle{plain}
\newtheorem{theorem}{Theorem}[section]

\newtheorem{lemma}[theorem]{Lemma}

\theoremstyle{definition}

\theoremstyle{remark}

\newtheorem{example}{Example}

\usepackage[textsize=tiny]{todonotes}

\icmltitlerunning{Divide-and-Denoise: A Game-Theoretic Method for Fairly Composing Diffusion Models}

\begin{document}

\twocolumn[
  \icmltitle{\large Divide-and-Denoise: A Game-Theoretic Method for Fairly Composing Diffusion Models}

  \icmlsetsymbol{equal}{*}

\begin{icmlauthorlist}
    \icmlauthor{Abhi Gupta}{equal,mitorc,mitcsail}
    \icmlauthor{Polina Barabanshchikova}{equal,ellis,aalto}
    \icmlauthor{Vikas Garg}{aalto,yaiyai}
    \icmlauthor{Samuel Kaski}{ellis,aalto,manchester}
    \icmlauthor{Tommi Jaakkola}{mitorc,mitcsail}
\end{icmlauthorlist}

\icmlaffiliation{mitorc}{MIT ORC, USA}
\icmlaffiliation{mitcsail}{MIT CSAIL, USA}
\icmlaffiliation{aalto}{Department of Computer Science, Aalto University, Finland}
\icmlaffiliation{ellis}{ELLIS Institute Finland}
\icmlaffiliation{manchester}{Department of Computer Science, University of Manchester, UK}
\icmlaffiliation{yaiyai}{YaiYai Ltd}

\icmlcorrespondingauthor{Abhi Gupta}{abhig@mit.edu}

  \icmlkeywords{Machine Learning, ICML}

  \vskip 0.3in
]

\printAffiliationsAndNotice{\icmlEqualContribution}

\begin{abstract}
    The abundance of pre-trained diffusion models provides an opportunity for composition. Combining several models, however, runs the risk of one model dominating or models disagreeing with each other. Here, we propose Divide-and-Denoise, a method for coordinating multiple pre-trained diffusion models during sampling. Much like managing a specialized workforce, our method creates a fair but efficient division of labor across models. Central to our method is the notion of an allocation which defines the responsibility of each model to every region of the noisy sample. At every timestep, we then denoise by (i) updating the allocation by solving a fair division game, where we divide the sample into regions that maximize total utility under fairness constraints, and (ii) aligning the models with this allocation, where we guide each model to denoise within its assigned region. This leads to a new composite denoising process that evolves in tandem with a division process. We evaluate Divide-and-Denoise on conditional image generation. Across several quality metrics, including the GenEval benchmark, our method outperforms baselines and resolves common failures including missing objects and mismatched attributes. Experiments show that Divide-and-Denoise utilizes each model's expertise without neglecting any other model. 
\end{abstract}

\section{Introduction}
Large-scale diffusion models are changing the game for many disciplines. In robotics, models trained on expert demonstrations can act as long-horizon planners in unseen environments \cite{xu2024octo,chi2023diffusionpolicy,ajay2023decisiondiffuser,sun2023imitating}, while in biomedicine, models trained on protein structures can propose candidate therapeutics \cite{jumper2021alphafold,corso2023diffdock,alamdari2023evodiff}. These advancements build on a substantial body of work in computer vision, where diffusion models were introduced \cite{song2019scorematching,ho2020ddpm}, refined \cite{geng2024visualanagrams,peebles2023dit,nichol2021glide,saharia2022imagen,burgert2023diffusionillusion}, and scaled \cite{rombach2022ldm,sehwag2024microbudget}. However, training effective diffusion models demands significant computational resources \cite{sehwag2024microbudget}, carries a large carbon footprint \cite{sehwag2024microbudget}, and often requires task-specific fine-tuning \cite{black2023dpok,wallace2024diffusiondpo,chen2023texforce}. Given these challenges, there is a pressing need for effective model reuse, control, and composition \cite{dhariwal2021diffusion,ho2022classifierfree}.

Among the many available models, choosing which one to use is not always obvious. A collection of models may even be used together in order to generate data that no individual model could generate alone. Consider, as a running example, one model trained on images of dogs and another on cats. A common approach is to define a composite distribution as the product or mixture of the 'dog' and 'cat'  densities \cite{liu2022compositional, du2023reduce}.
Other analytical operations include the harmonic mean and contrast \cite{garipov2023compositional}, as well as logical operations such as AND \cite{skreta2024superposition}. Although these operations permit tractable sampling, they are often too simple to preserve the characteristics of each model's distribution when there is conflict. For instance, if models are trained on images of animals appearing mainly in the center, sampling from their product density typically produces incoherent, overlapping dogs and cats.  

\textbf{Related Work.} Recent work has explored composing text-to-image diffusion models to improve spatial control \cite{bar2023multidiffusion, du2023reduce}. The typical strategy is to have the user segment an image into spatial regions, assign each region a text prompt (e.g., 'dog' or 'cat'), and then denoise each region with the corresponding model. Although simple to implement, these techniques are restricted to user-defined allocations. This kind of division of labor between models is cumbersome to specify, infeasible to define in many domains (e.g., manually partitioning proteins), and does not take into account the strengths or weaknesses of each model. \citet{bar2023multidiffusion} for example assume models can faithfully follow the user-prescribed layout, an assumption that often fails and requires additional forms of guidance \cite{Couairon2023ZeroShotSL, manukyan2023hd}.

\textbf{Contributions.} 
In order to address these shortcomings, we propose Divide-and-Denoise: a game-theoretic framework for coordinating multiple pre-trained diffusion models. Instead of requiring ground-truth partitions, we infer them online by requiring a division of labor among the models that is both fair and efficient. Our method is fully compositional in the sense that models need not share weights, architectures, or training data, as long as they operate in a latent space of the same dimension. We summarize our contributions below:
\vspace{-1em}
\begin{itemize} \setlength{\itemsep}{0pt} \setlength{\parskip}{0pt}
    \item An inference-time algorithm for coordinating multiple pre-trained diffusion models with differing expertise.
    \item Coupled processes: (i) a \textit{division process} which solves a fair division game for assigning regions to models and (ii) a composite \textit{denoising process} which aligns each model with its assigned region.
    \item Two formulations of model utilities: a general definition applicable to any conditional diffusion model, %
    and a specific instantiation using attention maps for models with  cross-attention conditioning.
    \item Empirical validation using GenEval \cite{ghosh2023geneval}, CLIP-Score, VQAScore, and ImageReward metrics showing a team of single-concept models outperform existing composition techniques and  multi-concept models. 
\end{itemize}

\begin{figure*}[t]
\centering
\includegraphics[width=\linewidth]{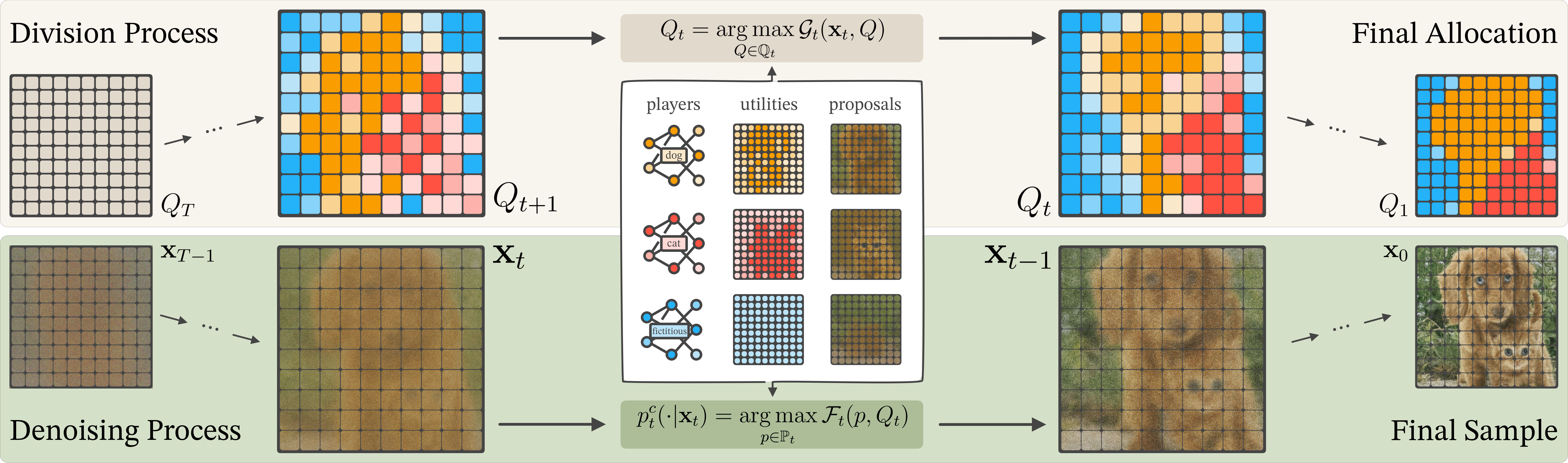}
\caption{Divide-and-Denoise. A noisy image is iteratively refined by two coupled processes: (i) a division step that computes a fair and efficient division of the latent image given by the allocation $Q$ and (ii) a composite denoising step that reconciles the proposals of several single concept diffusion models into a composite update $p$ using this allocation. At every timestep, models provide utilities, the image is divided into soft regions in order to maximize total utility under fairness constraints, and each region is denoised with the assigned model.}
\label{fig:1}
\end{figure*}

\section{Background}
\subsection{Diffusion Samplers}
Diffusion models define a forward probability path $q_t$, starting from the data distribution $q_0$, and gradually corrupting data with Gaussian noise. 
These models are typically provided as time-dependent score networks $s_{t}(\rvx;\theta)$ approximating $ \nabla_{\rvx} \log q_t(\rvx)$ at each timestep $t$.
To generate new data from diffusion models, we can simulate the denoising process with samplers including DDPM \cite{ho2020denoisingdiffusionprobabilisticmodels}, DDIM \cite{song2022denoisingdiffusionimplicitmodels}, and other numerical methods. All these procedures can be expressed as sampling from a sequence of Gaussian transition kernels
\[p_t(\rvx_{t -1}| \rvx_{t}; \theta) \coloneqq \mathcal{N}(\mu_t(\rvx_t; \theta), \sigma_t^2 I), \quad t =1, \dots T-1.\] Generation begins by drawing  $\rvx_{T - 1} \sim \mathcal{N}(0, I)$. The sampler then iteratively produces $\rvx_{T - 2}, \rvx_{T - 3},\dots,\rvx_{0}$ by applying these transition kernels until a final sample $\rvx_{0}$ is obtained. Notably, the family of samplers introduced in~\cite{DBLP:conf/iclr/SongME21} allows both the total number of sampling steps $T$ and the noise schedule ${\sigma_t}$ to be varied while keeping the pre-trained model fixed. In this framework, the noise level is parameterized by a scalar $\eta \in [0,1]$, where $\eta = 1$ recovers the DDPM sampler and $\eta = 0$ yields a fully deterministic trajectory. %

\subsection{Diffusion Conditioning}
To generate samples with specific concepts, the score $s_{t}(\rvx_t;\theta)$ used during sampling is replaced by a surrogate function $\hat s_{t}(\rvx_t, \vy;\theta)$ conditioned on a concept $\vy$. Classifier guidance~\cite{dhariwal2021diffusion} leverages 
\[
\nabla_{\rvx} \log q_t(\rvx_t | \vy) =  \nabla_{\rvx} \log q_t(\rvx_t) + \nabla_{\rvx} \log q_t(\vy | \rvx_t) 
\]
and defines $\hat s_{t}(\rvx_t, \vy;\theta) = s_{t}(\rvx_t;\theta) + \omega \nabla_{\rvx} \log q_t(\vy | \rvx_t; \theta)$, where $\log q_t(\vy | \rvx_t; \theta)$ is modeled by a pre-trained classifier and $\omega$ is a scaling factor.
Alternatively, one can train the diffusion model conditionally so that $s_t(\rvx_t,\vy;\theta) \approx \nabla_x \log q_t(\rvx | \vy)$. Both of these ideas are combined in a technique called classifier-free guidance~\cite{ho2022classifierfree}. It combines both unconditional and conditional scores to define
\(
\hat s_{t}(\rvx_t, \vy;\theta) = s_{t}(\rvx_t;\theta) + \omega (s_{t}(\rvx_t, \vy;\theta) - s_{t}(\rvx_t;\theta)).
\)

\subsection{Image Generation}
Diffusion models for conditional image generation are trained on large datasets $\mathcal{D} \coloneqq \{\rvz,\vy\}$, where $\rvz$ is a high-resolution image and $\vy$ is a label or a text prompt.
In practice, images are encoded into compressed representations $\rvx = \phi(\rvz) \in \mathbb{R}^{d \times d \times c}$, and the diffusion model operates in a latent space.

\textbf{Attention Maps.} 
Modern diffusion architectures rely heavily on cross-attention layers to condition generation on text prompts represented by embeddings of length $k$ \cite{vaswani2017attention}. The text-to-image  model can be expressed as
$s_{t}(\rvx_{t}, \vy;\theta) = f_{t}(\rvx_{t}, \{A^{(\ell)}_{t}(\rvx_{t}, \vy;\theta)\};\theta)$,
where $\{A_t^{(\ell)}\in\mathbb{R}^{d_\ell\times  d_\ell\times k}\}$ is a set of per-layer cross-attention maps. For each spatial location, these maps describe how strongly the model attends to each token in the prompt.
Generation can be controlled in creative ways by substituting attention maps $\{A^{(\ell)}_{t}(\rvx_{t}, \vy;\theta)\}$ with those from another pre-trained model $\{A^{(\ell)}_{t}(\rvx_{t}, \vy;\theta')\}$ \cite{hertz2022prompt}. Because layers operate at different resolutions $d_\ell$, the map sizes can vary. Upscaling and averaging these maps across layers creates $k$ saliency maps of size $d \times d$, one for each text token, as shown by \citet{tang-etal-2023-daam}. %

\subsection{Fair Division}
Dividing $m$ goods among $n$ players is a classical problem in game theory \cite{amanatidis2022fairdivision, nishimura2021envyfreeness, dickerson2014rise, cole2017convex, eisenberg1959consensus, caragiannis2019unreasonable}. In the setting of indivisible items, each player $i \in \{1, 2, \dots, n\}$ is allocated a bundle of goods, represented by a binary assignment vector $\rmM_{i} \in \{0,1\}^{m}$, so that no two players share any goods and all goods are allocated. Each player has a utility function $u_i: \{0,1\}^{m} \rightarrow \mathbb R_+$ that measures the value of any bundle. 
Among all possible partitions, we typically seek solutions that are fair and efficient with respect to these utilities. The three main notions of fairness are: \textit{envy-freeness} (no player prefers another player’s bundle), \textit{proportionality} (each player receives at least $1/n$ of their total utility), and \textit{equitability} (all players receive bundles of equal utility). Efficiency can be measured, for example, by Nash social welfare, the product of individual utilities.

\textbf{Mixed Allocations.} In the case of a single good, no matter who gets it, the partition is not fair to others. This highlights that fair assignments do not always exist. One way to address this challenge is to consider randomized allocations over all possible assignments:
\[\sM_{n, m} = \left\{\rmM \in \{0, 1\}^{n \times m} : \sum_{i = 1}^n \rmM_{i, j} = 1 \; \forall 1 \leq j \leq m \right\}\]
A mixed allocation $Q$ is a discrete distribution over $\sM_{n, m}$. 
Fairness notions are defined in terms of expected utilities under $Q$. For example, an envy-free allocation $Q$ satisfies $\E_{\rmM \sim Q} u_{i}(\rmM_{i}) \geq \E_{\rmM \sim Q} u_{i}(\rmM_{i'})$ for all $1 \leq i \neq i' \leq n$. Note that the uniform allocation $\mathcal U(\sM_{n, m})$ is always fair. Therefore, in the randomized setting, efficiency is crucial to avoid trivial solutions.

\textbf{Decomposable Allocations.}
When utilities are additive in the goods, $u_i(\rmM_i) = \sum_{j=1}^{m}{u_{ij}\rmM_{i,j}}$, the expected utility of player $i$ simplifies to $\sum_{j=1}^{m}u_{ij} Q_{ij}$, where $Q_{ij} \coloneqq \E_{\rmM \sim Q}\rmM_{i,j}$ is a fractional weight. 
We say that an allocation $Q$ is \textit{decomposable} if 
\[
Q(\rmM) = \prod_{i = 1}^n \prod_{j = 1}^n Q_{ij}^{\rmM_{i,j}} \quad \forall \rmM \in \sM_{n, m}.
\]
Decomposable allocations are essentially equivalent to fractional allocations of $m$ divisible goods, where player $i$ receives a fraction $Q_{ij}$ of good $j$. 

\section{Divide-and-Denoise}

We study the problem of coordinating $n$ pre-trained diffusion models, each of which operates in a common latent space. We view a latent in this space as a feature map with $m$ features. Each feature need not be a scalar. Without loss of generality, we denote the sequence of denoising kernels for 
diffusion model $1 \leq i \leq n$ as follows:
\[
p^i_{T} = \mathcal{N}(0, I), \quad p^i_t(\cdot \mid \rvx_t) = \mathcal N(\mu^i_t(\rvx_{t}), \sigma_t^2I) , \; 1 \leq t < T.
\]
To avoid cluttering of notation, we will often write $p^i(\rvx_{t -1}| \rvx_{t})$ in place of $p^i_t(\rvx_{t -1}| \rvx_{t})$.

Additionally, we assume that models have additive preferences over the {latent features} denoted by $u_{ij} (\rvx, t)$, i.e. model $i$'s value for {feature $j$ of} latent $\rvx$ at timestep $t$. In Section~\ref{sec: utilities}, we show that all conditional models already possess intrinsic utilities and offer alternatives as well.
Our goal is to define a composite denoising process with kernels $p^c_t(\cdot|\rvx_t)$ that best accounts for the preference of each model. 

Models may differ from each other in many ways. In this work, we focus on the case where each model is conditioned on a concept $\vy_i$. We expect samples from $p^c_t$ to ideally match what a single model trained on all the concepts appearing together would generate. Such a model, however, may not be available in practice. Note that we do not require the models to share the same architecture or parameters. 

\subsection{Simulating Two Processes}
The main components of our approach are outlined in Figure~\ref{fig:1}. 
Divide-and-Denoise generates two coupled trajectories: a sampling path of the composite denoising process, obtained by iteratively drawing $\rvx_{t-1} \sim p^c_t(\rvx_{t - 1}|\rvx_t)$, and a path of the division process given by allocations $Q_t$, also obtained by iteratively updating in time. We define each allocation $Q_t$ to be a distribution over $\sM_{n, m}$, the space of partitions of {$m$ latent features}
across $n$ models. Since $Q_t$ specifies how the {features} at time $t$ are distributed among the pre-trained models, it may be interpreted as a division of labor.

We initialize with $\quad Q_{T} = \mathcal U(\sM_{n, m})$ and $p^c_{T} = \mathcal{N}(0, I)$, and draw the first noisy latent as $\rvx_{T - 1} \sim p^c_{T}$. At each of the remaining timesteps $1 \leq t < T$, we update the allocation and the composite process according to the bi-level optimization:
\begin{equation}
\label{eq: second optimization}
Q_t = \argmax_{Q \in \sQ_t} \mathcal G_t(\rvx_t, Q)
\end{equation}
\begin{equation}
\label{eq: first optimization}
p^c_t(\cdot | \rvx_t) = \argmax_{p \in \sP_t} \mathcal F_t(p, Q_t)
\end{equation} 

The choice of the optimization objectives $\mathcal G$ and $\mathcal F$, and the constrained sets $\sQ_t$ and $\sP_t$ will be discussed in Sections~\ref{sec: solving for division} and~\ref{sec: compositional generation} respectively. The goal of the first problem (\ref{eq: second optimization}) is to fairly and efficiently divide the latent among the individual diffusion models, while the purpose of the second problem (\ref{eq: first optimization}) is to choose a denoising update that best aligns with this division. The optimization objectives are expressed through a common alignment score $U_t$ 
with a problem-specific regularization. At the end of each timestep,
we sample a denoised latent $\rvx_{t - 1}$ from $p^c_{t}(\cdot | \rvx_t)$.

\subsection{Computing a Fair and Efficient Division}
\label{sec: solving for division}
We formulate the problem of finding the next allocation (\eqref{eq: second optimization}) as a fair division game, where the goods are latent {features} and the players are the individual diffusion models. 
The efficiency of the allocation is measured in terms of the expected total utility
\begin{equation*}
\label{eq: reward}
U_t(\rvx, Q) = \E_{\rmM \sim Q} \sum_{i = 1}^n {\sum_{j = 1}^m \rmM_{i,j} u_{ij}(\rvx, t)}.%
\end{equation*}
The objective $\mathcal G_t$ is the efficiency score regularized by a Kullback-Leibler (KL) divergence with positive weight $\beta_t$:
\begin{equation}
\label{eq: total payoff}
\mathcal G_t(\rvx_t, Q) = U_{t}(\rvx_t, Q) - \beta_t \KL(Q||Q_{t +1}).
\end{equation}
The regularization term penalizes abrupt changes between consecutive allocations, encouraging temporally smooth allocation trajectories that provide a stable signal for the composite denoising update. A hyperparameter $\beta_t$ controls the trade-off between efficiency and smoothness. For example, when $\beta_t \rightarrow \infty$ the allocation $Q_t$ remains uniform throughout generation.

A solution is constrained to lie in the set of fair allocations $\sQ_t$. We express this constraint set as
\begin{equation*}
\label{eq: constraint set}
    \small{\sQ_t = \left\{Q \in \Delta(\sM_{n, m}): \E_{\rmM \sim Q} \sum_{i = 1}^n  \sum_{j = 1}^m \rmM_{i,j} \phi_{ij}(\rvx_t, t) \preceq \vb  \right\}}
\end{equation*}
with coefficients $\small{\phi_{ij}(\rvx_t, t) = (\phi_{ij}^1(\rvx_t, t), \dots, \phi_{ij}^l(\rvx_t, t))}$ and $\small{\vb = (b_1, \dots, b_l)}$, for all $1 \leq i \leq n$ and $1 \leq j \leq m$, specifying $l$ linear constraints.
Despite this simple form, these sets are flexible enough to represent common notions of fairness under additive utilities as shown below.
\begin{example}
Using a single linear inequality $\E_{\rmM \sim Q} \sum_{i = 1}^n  \sum_{j = 1}^m \rmM_{i,j} \phi^1_{ij}(\rvx_t, t) \preceq b_1$, we can encode the following relations:
\vspace{-1em}
\begin{enumerate}
\item Setting $b_1 = 0$ and $\phi^1_{kj}(\rvx_t, t) = -u_{ij}(\rvx_t, t) I(k = i) + u_{ij}(\rvx_t, t) I(k = i')$ is equivalent to saying that player $i$ is not envious of player $i'$.
\vspace{-0.2em}
\item Setting $b_1 = 0$ and $\phi^1_{kj}(\rvx_t, t) = -u_{ij}(\rvx_t, t) I(k = i) + u_{ij}(\rvx_t, t) / n$ is equivalent to constraining player $i$ to be allocated at least $1/n$ of its total utility.
Alternatively, for the normalized utilities, we can set $b_1 = -1/n$ and $\phi^1_{kj}(\rvx_t, t) = -u_{ij}(\rvx_t, t) I(k = i)$.
\item Setting $b_1 = 0$ and $\phi^1_{kj}(\rvx_t, t) = -u_{ij}(\rvx_t, t) I(k = i) + u_{i'j}(\rvx_t, t) I(k = i')$ is equivalent to saying that the allocated utility of player $i$ is greater or equal to that of player $i'$.
\end{enumerate}
\end{example}
Clearly, by stacking inequalities, we can represent envy-free, proportional, and equitable constraints or their combinations for any number of players. It is worth noting that the uniform allocation is always fair, so the feasible set is not empty.

We conclude this section by introducing a generic solution to the optimization problem in \eqref{eq: second optimization}.

\begin{theorem}
\label{theorem: q update}
Assume that allocation $Q_{t + 1}$ is decomposable with weights $Q^{t + 1}_{ij}$. Then, the optimal allocation $Q_{t}$ solving the fair division game (\ref{eq: second optimization}) is also decomposable with weights
{\abovedisplayskip=4pt\belowdisplayskip=8pt
\begin{equation}
\label{eq: q update}
     Q^t_{ij} = \frac{\exp( -\langle \lambda^{*}, \phi_{ij}(\rvx_t, t)\rangle  + u_{ij}(\rvx_{t}, t)/\beta_t) Q^{t + 1}_{ij}}{Z_j(\lambda^*)},
\end{equation}
}
where $\small{Z_j(\lambda^*) = \sum_{i = 1}^n e^{-\langle \lambda^{*}, \phi_{ij}(\rvx_t, t)\rangle  +  u_{ij}( \rvx_{t}, t)/\beta_t} Q^{t + 1}_{ij}}$ is a normalization constant and $\lambda^*$ is a solution of a dual problem that reads
{\abovedisplayskip=-4pt\belowdisplayskip=-6pt
\begin{equation}
\label{eq: dual problem main}
\max_{\lambda \geq 0} \quad -\langle \vb, \lambda\rangle -  \sum_{j = 1}^m \log Z_j(\lambda).
\end{equation}
}
\end{theorem}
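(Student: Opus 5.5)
The plan is to treat \eqref{eq: second optimization} as a convex program over the simplex $\Delta(\sM_{n,m})$ and solve it through Lagrangian duality. The objective $\mathcal G_t(\rvx_t,\cdot)$ is linear plus $-\beta_t$ times a KL divergence, so it is strictly concave. The constraints defining $\sQ_t$ are linear in $Q$. The feasible set is nonempty because it contains the uniform allocation, and I will argue that Slater's condition holds. For that I intend to use the uniform allocation as the feasible point: if some constraints are tight there, I would restrict attention to the affine hull they define, or else assume strict feasibility as the standard regularity condition. Under that condition strong duality holds and the optimal $Q_t$ is unique.

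\textbf{Step 1: the Lagrangian.} I introduce multipliers $\lambda\ge 0$ for the $l$ inequalities and write
\[
L(Q,\lambda)=\E_{\rmM\sim Q}\Big[\sum_{i,j}\rmM_{i,j}\big(u_{ij}-\beta_t\langle\lambda,\phi_{ij}\rangle\big)\Big]+\beta_t\langle\lambda,\vb\rangle-\beta_t\KL(Q\|Q_{t+1}),
\]
where the multipliers are rescaled by $\beta_t$ for convenience. Maximizing over $Q\in\Delta(\sM_{n,m})$ is the Gibbs variational principle (Donsker--Varadhan). The maximizer is
\[
Q(\rmM)\propto Q_{t+1}(\rmM)\exp\Big(\sum_{i,j}\rmM_{i,j}\big(u_{ij}/\beta_t-\langle\lambda,\phi_{ij}\rangle\big)\Big),
\]
and the maximal value is $\beta_t\langle\lambda,\vb\rangle+\beta_t\log$ of the normalizer.

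\textbf{Step 2: factorization.} Since $Q_{t+1}$ is decomposable, $Q_{t+1}(\rmM)=\prod_{i,j}(Q^{t+1}_{ij})^{\rmM_{i,j}}$, so the tilted measure is a product over $(i,j)$ of $w_{ij}^{\rmM_{i,j}}$, where
\[
w_{ij}=Q^{t+1}_{ij}\,e^{-\langle\lambda,\phi_{ij}\rangle+u_{ij}/\beta_t}.
\]
Every $\rmM\in\sM_{n,m}$ assigns each good $j$ to exactly one player. Summing over $\sM_{n,m}$ therefore factorizes over goods, giving normalizer $\prod_j\sum_i w_{ij}=\prod_j Z_j(\lambda)$. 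It follows that $Q$ is decomposable with weights $w_{ij}/Z_j(\lambda)$, and these weights sum to one over $i$ for each $j$. Evaluated at $\lambda^*$, this is exactly \eqref{eq: q update}. The same factorization also shows that $Q_T=\mathcal U(\sM_{n,m})$ is decomposable with $Q^T_{ij}=1/n$, which confirms the hypothesis is consistent.

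\textbf{Step 3: the dual.} The dual function is $g(\lambda)=\max_Q L(Q,\lambda)$, which from the above is $\beta_t\big(\langle\vb,\lambda\rangle+\sum_j\log Z_j(\lambda)\big)$ up to the sign convention. The dual problem minimizes $g$ over $\lambda\ge0$, which is the same as maximizing $-\langle\vb,\lambda\rangle-\sum_j\log Z_j(\lambda)$, matching \eqref{eq: dual problem main} after dividing by $\beta_t>0$. Strong duality and uniqueness of the primal maximizer then imply that $Q_t$ equals the Gibbs maximizer at any dual optimum $\lambda^*$. Existence of $\lambda^*$ follows from Slater's condition, since it makes the dual level sets bounded.

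\textbf{Main obstacle.} The delicate points are the signs and scaling of $\lambda$ relative to $\beta_t$, and justifying strong duality and attainment of $\lambda^*$. The uniform allocation may lie on the boundary of the fair set, for instance when equitability is imposed as two inequalities, so strict feasibility can fail. I would handle this either by invoking the refined Slater condition for affine constraints, which needs only feasibility in the relative interior of the simplex, or by noting that the problem is finite-dimensional with a strictly concave objective and polyhedral constraints. In that setting Lagrange multipliers exist by LP-type constraint qualification, since all constraints are linear.
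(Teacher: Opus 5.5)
Your proposal is correct and follows essentially the paper's route: Lagrangian duality, the exponentially tilted form of the inner optimizer, and factorization over goods because each $\rmM\in\sM_{n,m}$ gives every good to exactly one player. The paper differs only in first rewriting the game as the KL projection $\min_{Q\in\sQ_t}\KL(Q\|Q^*)$ with an explicit normalization multiplier $\nu$, whereas you apply the Gibbs variational principle directly. If anything, your version is tighter: your $Z_j$ matches the statement exactly (the paper's $Z_\lambda$ uses the normalized $Q^*$, so its dual is off by a $\lambda$-independent constant), and your refined-Slater argument at the full-support uniform allocation supplies the strong duality and attainment of $\lambda^*$ that the paper simply assumes. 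Your fallback ``linear constraints suffice'' remark would not, on its own, handle an optimum on the boundary of the simplex, where the KL term is not differentiable.
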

\begin{figure*}[t]
\includegraphics[width=\linewidth]{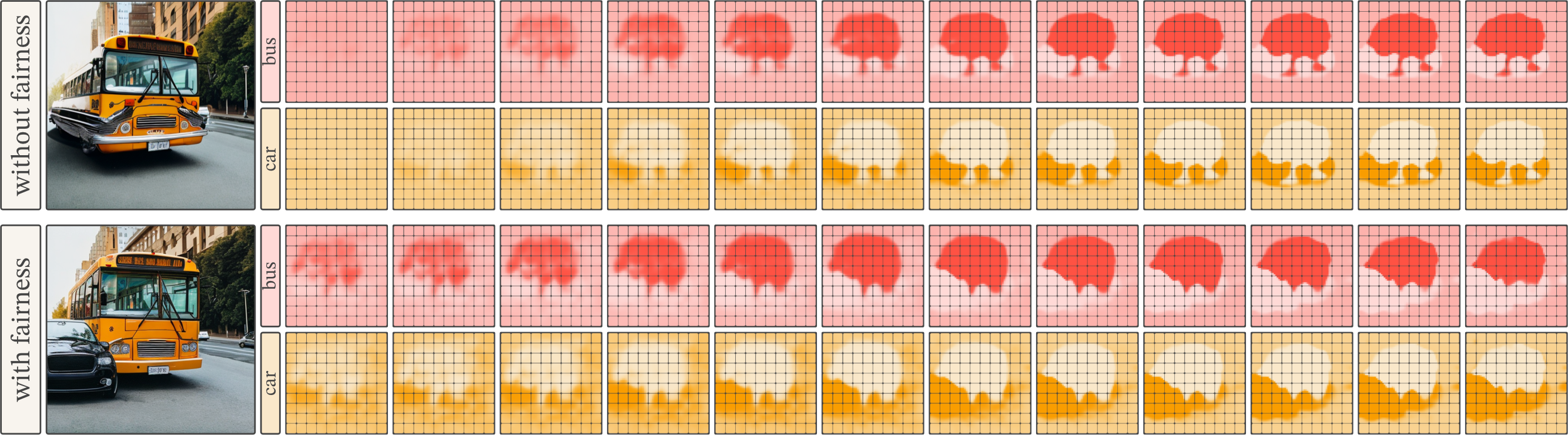}
\caption{The role of fairness in Divide-and-Denoise using Stable Diffusion. Top: without fairness, the car model is allocated significantly fewer pixels than the bus model, resulting in a missing object.
Bottom: with fairness, the allocation balances both models so the car is no longer envious of the bus. Fairness prevents a single model from dominating.}
\label{fig:2}
\vspace{-5pt}
\end{figure*}

\subsection{Aligning Models with their Allocations}
\label{sec: compositional generation}
In this section, we address the second problem (\eqref{eq: first optimization}) of selecting a composite denoising kernel $p^c_t$ from the set of feasible denoising distributions $\sP_t$. %
We introduce a new objective that explicitly aligns each diffusion model’s proposal with its assigned region, conditioned on the given allocation $Q$. Let $p_j$ denote the marginal of a denoising kernel $p$ {corresponding to the feature $j$.}%
We define
{\abovedisplayskip=6pt\belowdisplayskip=6pt
\begin{multline}
\label{eq: ft definition}
   \small{\mathcal F_t(p, Q) = \E_{\rvx_{t -1} \sim p}  U_{t - 1}(\rvx_{t - 1}, Q)}  \\ \small{-\alpha_t \E_{\rmM \sim Q} \left[ \sum_{i = 1}^n\sum_{j = 1}^m  \rmM_{i,j} \KL(p_{j}(\cdot | \rvx_t) || p_{j}^i(\cdot | \rvx_t))\right]}
\end{multline}
}
The aim of the KL regularization is to keep the denoising update for the {features} allocated to model $i$ close to its proposal. A hyperparameter $\alpha_t > 0$ controls the trade-off between alignment with the given allocation and adherence to the proposals of individual models. 
Notice how $u_{ij}(\rvx_{t-1}, t-1)$ contributes to the player's utility $u_i(\rmM_i)$ only if $\rmM$ assigns {feature} $j$ to model $i$. Maximizing $U_{t-1}$ encourages each model to concentrate preference on its allocated region while suppressing preference outside it.%

Since the objective in \ref{eq: first optimization} can be non-linear, we cannot find an explicit solution in general. However, a solution exists under simplifying assumptions.
\begin{theorem}
\label{theorem: p update}
Consider the optimization in \eqref{eq: first optimization} with the following assumptions: (1) $\sP_t$ is a set of all distributions {over the latent space that factorize over features }%
and (2) $U_t$ is linear jointly in the first argument and $t$. For each $1 \leq i \leq n$, define the marginal weight vector $Q_i$ as %
$\E_{\rmM \sim Q_t} \rmM_{i}$. 

Then, the optimal composite denoising kernel is given by $p^c_t(\cdot| \rvx_t) = \mathcal N(\mu^c_t, \sigma_t^2I)$, where
\[
\mu^c_t = \sum_{i = 1}^n \mu^{i}_t(\rvx_t)\odot Q_i + \frac{\sigma_t^2}{\alpha_t} \nabla_{\rvx_{t}} U_{t}(\rvx_{t}, Q).
\]
{Here, $\odot$ denotes feature-wise product.}
\end{theorem}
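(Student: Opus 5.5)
Since $U_{t-1}$ is assumed linear jointly in its first argument and in $t$, I will linearize it about the current point. The first step rewrites the expected utility term as
\[
\E_{\rvx_{t-1}\sim p} U_{t-1}(\rvx_{t-1},Q) = U_t(\E_p \rvx_{t-1}, Q) + c_t,
\]
where $c_t$ does not depend on $p$. Linearity in $\rvx$ lets me replace $U_{t-1}(\rvx_{t-1},Q)$ by $U_t(\rvx_t,Q) + \langle \nabla_{\rvx_t} U_t(\rvx_t,Q), \rvx_{t-1}-\rvx_t\rangle$ plus a term from the time shift. Joint linearity makes that time-shift term affine and independent of $\rvx_{t-1}$, so it drops from the argmax. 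The utility part of $\mathcal F_t$ then reduces to $\langle g, \E_p \rvx_{t-1}\rangle + \text{const}$ with $g := \nabla_{\rvx_t}U_t(\rvx_t,Q)$, so it depends on $p$ only through its mean.

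The second step handles the regularizer. Because the KL term is linear in the indicator $\rmM_{i,j}$, taking the expectation over $\rmM\sim Q$ gives $\alpha_t\sum_j\sum_i Q_{ij}\,\KL(p_j\|p^i_j)$. Assumption (1) says $p=\prod_j p_j$, and the utility term also splits as $\sum_j\langle g_j,\E_{p_j}\rvx_{t-1,j}\rangle$. The objective therefore separates into $m$ independent problems, one per feature $j$:
\[
\max_{p_j}\; \E_{p_j}\langle g_j, x\rangle - \alpha_t \sum_i Q_{ij}\,\KL(p_j\|p^i_j).
\]
For each $j$ the allocation satisfies $\sum_i Q_{ij}=1$. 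This lets me combine the KL terms as $\sum_i Q_{ij}\KL(p_j\|p^i_j) = \KL(p_j\|\tilde p_j) - \log \tilde Z_j$, where $\tilde p_j\propto\prod_i (p^i_j)^{Q_{ij}}$ is a normalized geometric mixture. Because all $p^i_j$ are Gaussians with the common covariance $\sigma_t^2 I$, the mixture $\tilde p_j$ is Gaussian with mean $\sum_i Q_{ij}\mu^i_{t,j}$ and the same covariance.

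The third step solves each per-feature problem with the Gibbs variational principle. The maximizer of $\E_{p_j}\langle g_j,x\rangle - \alpha_t\KL(p_j\|\tilde p_j)$ is $p_j\propto \tilde p_j(x)\exp(\langle g_j,x\rangle/\alpha_t)$. Exponentially tilting a Gaussian linearly shifts its mean by the covariance times the tilt, which gives mean $\sum_i Q_{ij}\mu^i_{t,j} + \frac{\sigma_t^2}{\alpha_t}g_j$ with covariance $\sigma_t^2 I$ unchanged. Reassembling the features gives the stated $\mu^c_t = \sum_i \mu^i_t\odot Q_i + \frac{\sigma_t^2}{\alpha_t}\nabla_{\rvx_t}U_t(\rvx_t,Q)$.

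I expect the main obstacle to be the first step: making precise how "linear jointly in $\rvx$ and $t$" allows $U_{t-1}$ at $\rvx_{t-1}$ to be replaced by the gradient of $U_t$ at $\rvx_t$. The key is that the gradient of a linear function is constant, and the $t$-dependence contributes only additive constants. I will verify that the resulting coefficient is exactly $g=\nabla_{\rvx_t}U_t$, possibly up to the paper's sign conventions. The remaining steps are standard Gaussian algebra.
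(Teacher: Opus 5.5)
Your proposal is correct and follows essentially the same route as the paper: both combine the $Q_{ij}$-weighted KL terms into a single KL against the Gaussian geometric mixture $p^Q_t \propto \prod_{i,j} (p^i_j)^{Q_{ij}}$ (mean $\sum_i Q_i \odot \mu^i_t(\rvx_t)$), apply the Gibbs variational principle, and use joint linearity of $U$ to read off the mean shift $\frac{\sigma_t^2}{\alpha_t}\nabla_{\rvx_t}U_t(\rvx_t,Q)$ from the exponential tilt. The only difference is organizational: the paper works with the joint kernel and linearizes only after obtaining $p^c_t \propto \exp(U_{t-1}/\alpha_t)\,p^Q_t$, whereas you linearize first and solve feature by feature, which has the minor advantage of making it immediate that the maximizer lies in the factorized class $\sP_t$.
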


Remarkably, the solution naturally decomposes into two parts: a compositional update given by the first term and a guidance update given by the gradient term. Furthermore, when the influence of the regularization increases ($\alpha_t \rightarrow \infty$), the optimal solution converges to the standard MultiDiffusion \cite{bar2023multidiffusion} update.

In practice, we propose to use a local linearization technique. Applying a first-order Taylor expansion to linearize the reward, we approximate $p^c_t(\cdot| \rvx_t)$ with $\mathcal N(\hat \mu^c_t, \sigma_t^2I)$, where
\begin{equation*}
\hat \mu^c_t(\cdot| \rvx_t) = \sum_{i} \mu^{i}_t(\rvx_t)\odot Q_{i} + \frac{\sigma_t^2}{\alpha_t} \sum_{i, j} Q_{ij} \nabla_{\rvx_{t}} u_{ij}(\rvx_{t}, t).
\end{equation*}

We observe that performance is sensitive to the hyperparameter $\alpha_t$. Large values of $\alpha_t$ suppress the influence of the guidance term, while overly small values may lead to out-of-distribution samples. We find it useful to reparameterize $\alpha_t$ as
\begin{equation}
\label{eq: hyperparameter gamma}
    \alpha_t = \frac{\sigma_t}{\gamma } \left\| \nabla_{x_{t}} U_{t}(x_{t}, Q)\right\|.
\end{equation}
where $\gamma$ is a constant independent of time. The proposed approach is summarized in Algorithm~\ref{alg:coord_denoise}.
\subsection{Fictitious player}
A standard approach to compositional sampling \cite{liu2022compositional} combines diffusion processes by computing geometric mean of their denoising kernels, which reduces to averaging the means of the Gaussians.
Our experiments demonstrate that this process alone often fails to resolve conflicts between models with overlapping preferences, frequently producing blended concepts. Nevertheless, this behavior can be exploited to promote collaboration between models in low-utility regions. To this end, we augment the set of players with a fictitious process whose denoising kernels are given by
\[
p^{n + 1}_t(\cdot| \rvx_t) =  \mathcal{N}(\mu^{n + 1}_t(\rvx_t) , \sigma_t^2 I),
\]
with $\mu^{n + 1}_t(\rvx_t) = \frac{1}{n}\sum_{i = 1}^n \mu^i_t(\rvx_t)$. We assign this player a uniform utility, namely $u_{(n + 1)j} = 1/ m$ for each $j$. Importantly, fairness is enforced only for non-fictitious players.

\subsection{Defining Player Utilities}
\textbf{Score-based Utility}.
We view a latent as a feature map $\rvx\in\mathbb{R}^{m\times c}$ with features $\rvx^j\in\mathbb{R}^c$ ($c$ channels for images). %
For $a, b \in \mathbb{R}^{m\times c}$ and a binary vector $\rmM_i \in\{0,1\}^m$, let
$\langle a,b\rangle_{\rmM_i}=\sum_{j=1}^m \rmM_{i,j}\,(a^j)^{\top}b^j$ be the inner product restricted to the features selected by $\rmM_i$. Writing $g_i=\nabla_{\rvx} q_t(\vy_i| \rvx;\theta_i) / \|\nabla_{\rvx} q_t(\vy_i| \rvx;\theta_i)\|$ with blocks $g^j_i\in\mathbb{R}^c$, we define utility of a model $i$ for a bundle $\rmM_{i'}$ as
\[
  u_i(\rmM_{i'})=\langle g_i,g_i\rangle_{\rmM_{i'}}
                =\sum_{j=1}^m \rmM_{i',j}\,\|g^j_i\|^2,
\]
the energy of $g_i$ captured by its assigned features. Appendix~\ref{app:score-based-utilities} shows that under classifier-free guidance these utilities follow directly from
the score:
\[
  u_{ij}(\rvx,t)=\frac{\|s^j_t(\rvx,\vy_i;\theta_i)-s^j_t(\rvx;\theta_i)\|^2}
                     {\|s_t(\rvx,\vy_i;\theta_i)-s_t(\rvx;\theta_i)\|^2}.
\]
These can be computed for any conditional diffusion model, without building utilities by hand or from data.

\textbf{Attention-based Utility}. In the text-to-image setting, cross-attention maps have been shown to be effective indicators of the relevance of each pixel to a target word or phrase. This motivates us to define attention-based utilities as
{\abovedisplayskip=8pt\belowdisplayskip=8pt
\begin{equation*}
\quad u_{ij}(\rvx, t) = \frac{A_t^{j}(\rvx, \vy_i; \theta_i)}{\sum_{j = 1}^m A_t^j(\rvx, \vy_i; \theta_i)},
\end{equation*}
}
where $A_t^{j}$ denotes the $j$-th coordinate of the attention map $A_t$ aggregated across layers and text tokens.
\label{sec: utilities}
\begin{figure}[t!]
\includegraphics[width=\linewidth]{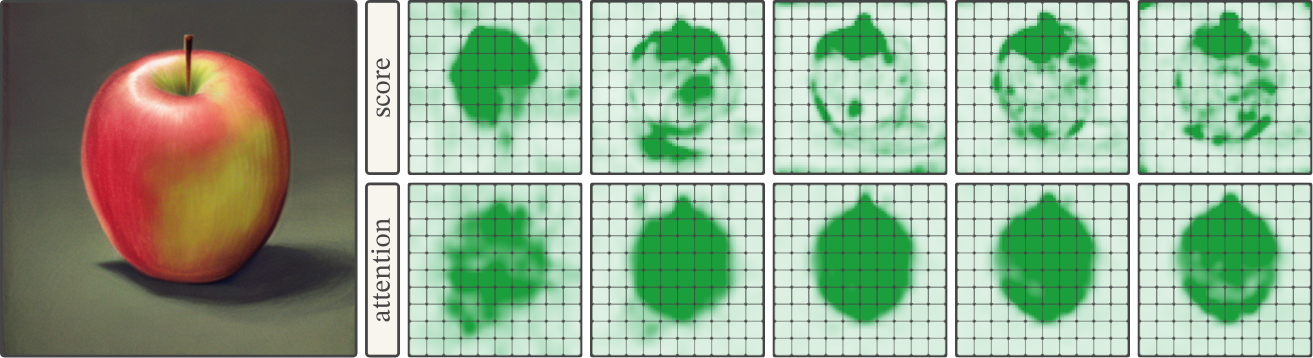}
\caption{Evolution of score-based utilities on top and attention-based on bottom when generating an apple with Stable Diffusion.}
\label{fig:6}
\vspace{-5pt}
\end{figure}

Attention-based utilities require the conditioning architecture of the diffusion model to include cross-attention layers. Fortunately, this requirement is met by many models in practice.
Figure~\ref{fig:6} compares the evolution of score-based and attention-based utilities when generating an image containing a single object. Attention-based utilities exhibit less noise and provide a temporally consistent localization signal.

\begin{algorithm}[t!]
\caption{Divide-and-Denoise}
\label{alg:coord_denoise}
\begin{algorithmic}
\State \textbf{Input:} $n$ pre-trained diffusion models, $\gamma > 0$, $\beta_t > 0$.
\State Initialize $p_T^{c} = \mathcal N(0, I)$ and $Q_T = \mathcal U(\mathcal M_{n + 1,m})$.
\State Sample $\rvx_{T - 1} \sim p^c_T$.
\For{$t = T-1, \ldots, 1$}
    \State Aggregate $\small{\{p_t^i(\cdot | \rvx_t) = \mathcal N(\mu_t^i(\rvx_t), \sigma_t^2 I)\},\; \{u_{ij}(\rvx_t, t)\}}$

    \State \textcolor{gray}{\# Division Process}
    \State Solve dual problem in \eqref{eq: dual problem main} for $\lambda^*$ 
    
    \State Update allocation
    \(\small{
        Q_{ij}^{t}
        \propto 
        e^{
            -\langle \lambda^*, \phi_{ij} \rangle
            + u_{ij}/\beta_t}
        Q_{ij}^{t+1}}\)
    \State \textcolor{gray}{\# Composite Denoising Process}
    \State Update mean of the denoising kernel with
    \State $\small{\hat \mu_t^{c} = \sum_{i=1}^n \mu_t^i(\rvx_t) \odot  Q_{i}^{t} \;+\;\gamma\sigma_t \frac{\nabla_{\rvx_{t}} U_t(\rvx_t, Q)}{ \,\|\nabla_{\rvx_t} U_t(\rvx_t, Q)\|}}$
    \State \textcolor{gray}{\# Update Sample}
    \State Sample $\rvx_{t - 1} \sim p_{t}^{c}(\cdot | \rvx_t) = \mathcal N(\hat \mu_t^c(\rvx_t), \sigma_t^2 I)$
\EndFor
\end{algorithmic}
\end{algorithm}
\vspace{-12pt}

\vspace{8pt}
\section{Experiments}
\label{sec: experiments}
We test our coordination algorithm on image generation, choosing each diffusion model to be conditioned on a single class or prompt.

\subsection{Models}
\textbf{Stable Diffusion.} We use Stable Diffusion 2.0~\citep{rombach2022ldm}, a text-to-image latent diffusion model with cross-attention conditioning and latent dimension $64 \times 64 \times 4$. 
Here, single concepts are represented by short text descriptions. For a model $i$ conditioned on a single concept $\vy_i$, we use the prompt ``\texttt{an image with $\vy_i$}''.
For Stable Diffusion experiments, we use attention-based utilities. 
Following~\cite{chefer2023attend}, we aggregate only cross-attention maps at resolution $16$. After upscaling, this produces a $64 \times 64$ saliency map for each text token. Out of all saliency maps, we select only the ones corresponding to the tokens associated with $\vy_i$ and average them to obtain $A_t(\rvx, \vy_i; \theta_i)$.
For quantitative comparison with score-based utilities, see Appendix~\ref{app: additional res}.

\textbf{DiT.} We also evaluate our method using Diffusion Transformer (DiT)~\citep{peebles2023dit}, a class-conditioned diffusion model trained on ImageNet~\citep{imagenet}. In this setting, each model $i$ is conditioned on a single class $\vy_i$ from ImageNet, and no multi-concept model is available. For DiT experiments, we use score-based utilities. To reduce noise in the utilities, we apply Gaussian blur with kernel size $5$ and clip the utilities prior to normalization.

\subsection{Coordination Strategies}
In all experiments, we employ a DDIM scheduler~\citep{DBLP:conf/iclr/SongME21}, setting $T = 50$ sampling steps and a noise scale of $\eta = 0.015$. We use classifier-free guided models with the guidance scale $\omega=7.5$ for Stable Diffusion and $\omega = 4$ for DiT. Divide-and-Denoise uses hyperparameters $\gamma = \eta$ in \eqref{eq: hyperparameter gamma}, while in \eqref{eq: total payoff} we use constant $\beta_t=0.01$ for score-based utilities and $\beta_t=0.001$ for attention-based. If not specified otherwise, proportional fairness is applied: each of $n$ models is constrained to receive at least $1 / n$ of its total utility. Sensitivity to hyperparameters and choice of fairness constraints are discussed in Appendices~\ref{app: hyperparams} and~\ref{app: fairness}
We compare our method against the following baselines:

\textbf{Averaging.} We construct a composite denoising process by averaging the scores from each single-concept diffusion model at each generation step. This represents a baseline where the division of labor  is uniform. %

\textbf{Composable Diffusion.}
We employ a popular approach to compositional sampling from conditional models~\citep{liu2022compositional}. At each iteration, the scores are aggregated as
{\abovedisplayskip=10pt\belowdisplayskip=10pt
\[
\hat s_{t}(\rvx_t;\theta) = s_{t}(\rvx_t;\theta) + \sum\nolimits_{i =1}^n \omega_i(s_{t}(\rvx_t, \vy_i;\theta) - s_{t}(\rvx_t;\theta)).
\]}
For this baseline, we set $\omega_i = \omega$ for each $i$. Note that setting $\omega_i = \omega / n$ recovers the averaging baseline.

\textbf{Multi-Concept Diffusion.} We construct a composite denoising process with a single, multi-concept diffusion model. Since DiT models cannot simulatenously condition on several classes, this baseline applies only to Stable Diffusion where multiple concepts can be combined in a joint text prompt. We avoid enriching the multi-concept prompt with information beyond the concept set $\mathcal{Y} \coloneqq \{\vy_1, \dots, \vy_n\}$, such as relationships between pairs of concepts, since a team of specialized models would not typically have access to this information. The multi-concept prompt is constructed as ``\texttt{an image with $\vy_1$ and $\vy_2$ and $...$ and $\vy_n$}''. In this baseline the division of labor is implied by default through the output of a single Stable Diffusion model.

\textbf{MultiDiffusion.}
We adapt the compositional sampling technique proposed in~\cite{bar2023multidiffusion} to the setting when user-defined allocations are not available. Fixed assignment $\rmM$ divides the latent into $n$ equal vertical stripes and assigns the $i$-th strip to the 
$i$-th model. MultiDiffusion then samples from a sequence of transition kernels
\(
p^{\text{multi}}_t(\cdot | \rvx_{t}) = \mathcal N\left(\sum_{i = 1}^n\mu_t^i(\rvx_t) \odot \rmM_i, \sigma_t^2 I \right).
\)

\subsection{Evaluation Metrics} 
We evaluate our method along three axes: multi-concept image generation using single-concept models, correct attribute binding for complex concepts, and composition of intentionally conflicting concepts. 
We quantify first two of these criteria on a popular benchmark for image generation called GenEval \citep{ghosh2023geneval}. For each concept set $\mathcal Y$, we generate several images by changing the random seed.
GenEval detects objects from the COCO vocabulary~\citep{lin2014microsoft} and their color, and reports two specific metrics: \textbf{\%images}: The percentage of images containing all objects given by text prompts; \textbf{\%prompts}: The percentage of concept sets $\mathcal Y$ where at least one generated image contains all objects. %

We complement GenEval with three widely-used performance metrics, each defined as $r_{k}(\rvz, \vt)$, for an image $\rvz$ and a prompt $\vt$. CLIP-Score ($r_1$) measures text–image alignment by computing the cosine similarity between their CLIP embeddings~\citep{clip}. Reward ($r_2$) uses a learned reward model trained on human preference data to score how closely the image matches the prompt~\citep{xu2023imagereward}. VQA ($r_3$) assesses faithfulness by answering yes-no questions about the prompt and the generated image \citep{vqa}. For each metric, we report the following scores averaged across pairs $(\mathcal{Y}, \rvz)$ of a set of concepts $\mathcal{Y}$ and the generated image $\rvz$: \textbf{joint}: $r_{k}(\rvz, \vt)$ where $\vt$ is a multi-concept prompt for $\mathcal{Y}$; \textbf{min}: $\min_{\vy_i\in\mathcal{Y}} r_{k}(\rvz, \vt_i)$ where $\vt_i$ is a single-concept prompt for $\vy_i$.
\begin{figure}[t]
\includegraphics[width=\linewidth]{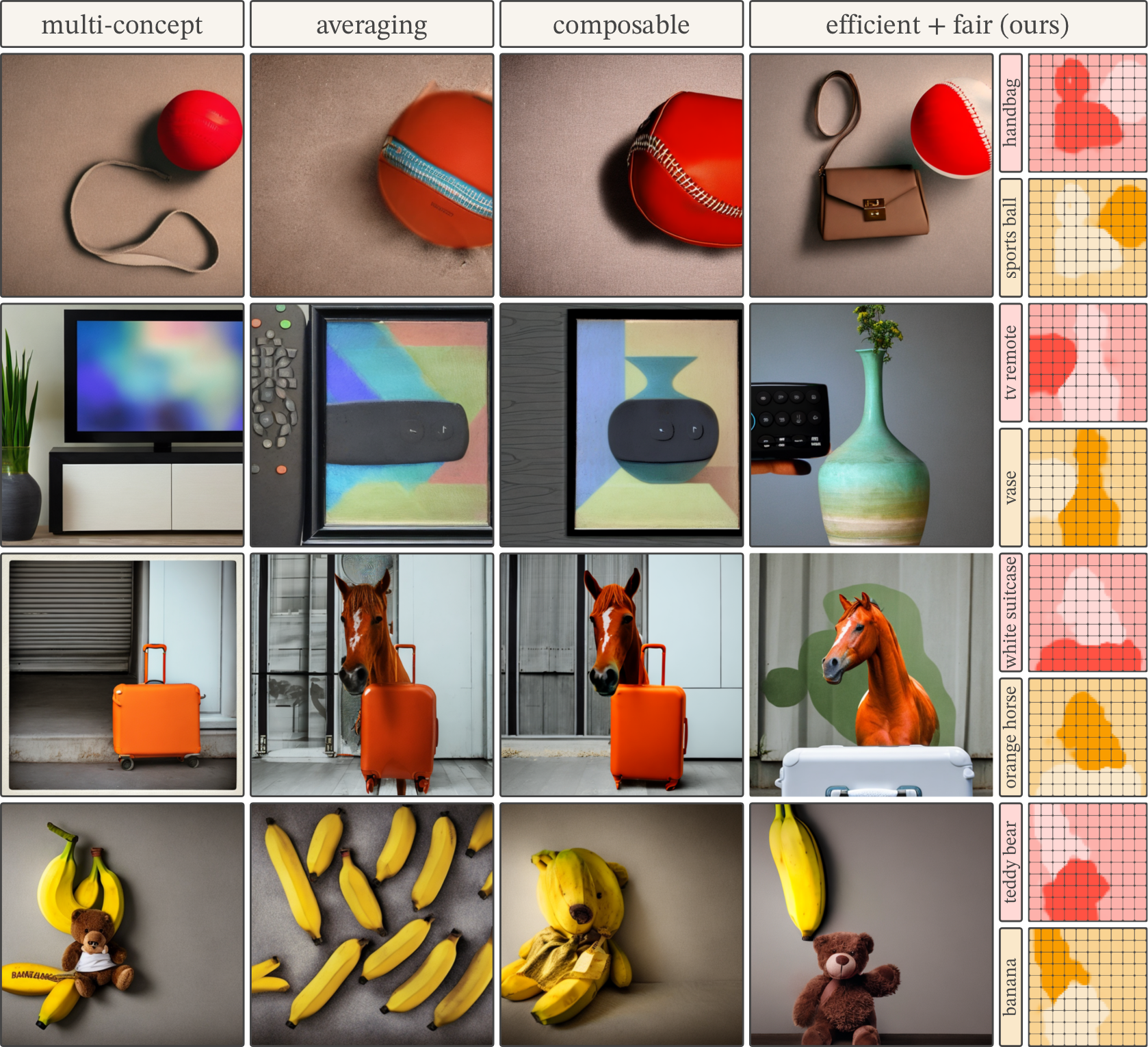}
\caption{Divide-and-Denoise avoids object overlap, preserves all objects, and correctly attributes colors, outperforming baselines on GenEval. Top to bottom: \textit{handbag+sports ball, tv remote+vase, white suitcase+orange horse, teddy bear+banana}.}
\label{fig:3}
\vspace{-5pt}
\end{figure}
\vspace{-5pt}

\begin{table*}[ht]
\vspace{-3pt}
\centering
\small 
\setlength{\tabcolsep}{2.5pt}
\caption{Performance of Divide-and-Denoise when coordinating 2 and 3 models conditioned on objects using Stable Diffusion.}
\begin{tabular}{@{}clcccccccc@{}}
\toprule
\multirow{2}{*}{Players} & \multirow{2}{*}{Coordination Strategy}
  & \multicolumn{2}{c}{GenEval $\uparrow$} 
  & \multicolumn{2}{c}{CLIP $\uparrow$} 
  & \multicolumn{2}{c}{Reward $\uparrow$} 
  & \multicolumn{2}{c}{VQA $\uparrow$} \\
\cmidrule(lr){3-4}\cmidrule(lr){5-6}\cmidrule(lr){7-8}\cmidrule(lr){9-10}
& & \%images & \%prompts & joint & min & joint & min & joint & min \\
\midrule\midrule
\multirow{6}{*}{2} & Averaging & \err{31.25\%}{3.23} & 59\% & \err{26.26}{0.27} & \err{18.64}{0.24} & \err{-0.49}{0.071} & \err{-1.46}{0.048} & \err{0.720}{0.018} & \err{0.610}{0.022} \\
& Composable Diff. & \err{36.50\%}{3.26} & 67\% & \err{26.85}{0.27} & \err{18.92}{0.27} & \err{-0.26}{0.084} & \err{-1.30}{0.059} & \err{0.749}{0.021} & \err{0.643}{0.025} \\
& Multi-Concept Diff. & \err{53.75\%}{3.32} & 86\% & \err{27.05}{0.29} & \err{18.77}{0.25} & \err{0.28}{0.090} & \err{-1.15}{0.064} & \err{0.753}{0.022} & \err{0.683}{0.025} \\
& MultiDiffusion & \err{58.00\%}{2.95} & 93\% & \err{27.65}{0.30} & \err{19.59}{0.22} & \err{0.34}{0.075} & \err{-0.99}{0.054} & \err{0.816}{0.016} & \err{0.738}{0.019} \\
\cmidrule(lr){2-10}
&Ours (w/o fairness)& \err{87.00\%}{2.15} & 98\% & \err{29.91}{0.26} & \err{\textbf{21.59}}{0.17} & \err{1.16}{0.056} & \err{-0.42}{0.043} & \err{0.959}{0.006} & \err{0.921}{0.008} \\
&Ours & \err{\textbf{88.50\%}}{1.86} & \textbf{99\%} & \err{\textbf{30.02}}{0.27} & \err{21.53}{0.16} & \err{\textbf{1.23}}{0.054} & \err{\textbf{-0.38}}{0.042} & \err{\textbf{0.960}}{0.007} & \err{\textbf{0.925}}{0.009} \\
\midrule
\multirow{6}{*}{3} &  Averaging & \err{1.50\%}{0.69} & 5\% & \err{25.46}{0.26} & \err{16.08}{0.16} & \err{-1.32}{0.052} & \err{-2.06}{0.025} & \err{0.461}{0.017} & \err{0.296}{0.015}
  \\ 
& Composable Diff. & \err{3.50\%}{0.94} & 13\% & \err{26.82}{0.29} & \err{16.03}{0.20} & \err{-1.06}{0.064} & \err{-1.97}{0.031} & \err{0.472}{0.019} & \err{0.304}{0.017}\\
& Multi-Concept Diff. & \err{14.75\%}{2.11} & 43\% & \err{28.45}{0.27} & \err{15.15}{0.23} & \err{-0.14}{0.086} & \err{-1.82}{0.044} & \err{0.532}{0.021} & \err{0.384}{0.021} \\
& MultiDiffusion & \err{14.00\%}{2.08} & 37\% & \err{28.05}{0.28} & \err{16.02}{0.19} & \err{-0.48}{0.078} & \err{-1.79}{0.044} & \err{0.537}{0.020} & \err{0.374}{0.021} \\
\cmidrule(lr){2-10}
 & Ours (w/o fairness) & \err{51.75\%}{3.08} & 88\% & \err{32.68}{0.27} & \err{18.96}{0.19} & \err{1.05}{0.066} & \err{-0.92}{0.053} & \err{0.872}{0.014} & \err{0.773}{0.018}\\
 & Ours & \err{\textbf{59.50\%}}{2.99} & \textbf{92\%} & \err{\textbf{33.21}}{0.27} & \err{\textbf{19.09}}{0.16} & \err{\textbf{1.22}}{0.056} & \err{\textbf{-0.79}}{0.048} & \err{\textbf{0.921}}{0.010} & \err{\textbf{0.829}}{0.014} \\
\midrule
\bottomrule
\end{tabular}
\label{table 1}
\vspace{-5pt}
\end{table*}

\subsection{Concepts as Objects}
\label{sec: concepts as objects}
We first evaluate our method along the axis of generating more than one concept. Here each concept is defined as a distinct object. We assess how well Divide-and-Denoise works for 
multi-concept generation across both Stable Diffusion and DiT setups. %

\textbf{Stable Diffusion.} %
For each problem instance, we randomly sample $n$ objects from the COCO vocabulary and define a single-concept prompt for each model $i$ as ``\texttt{an image with [\textit{object$_i$}]}''. 
In total, we construct $100$ unique $n$-object tuples and evaluate each tuple across $4$ different seeds. Results are presented in Table~\ref{table 1}, where rows $2$ and $3$ correspond to $n=2$ and $n=3$ object-specific models, respectively. For all metrics except  \%prompts we report standard error across concept sets. Example images can be found in Figure \ref{fig:3}. 

\begin{table}[ht]
\centering
\small 
\caption{Fraction of timesteps with fairness violations when coordinating 2 and 3 models conditioned on objects using Stable Diffusion (mean $\pm$ std over $400$ trajectories).}
\begin{tabular}{@{}clccc@{}}
\toprule
\multirow{2}{*}{Players} & \multirow{2}{*}{Coordination}
  & \multicolumn{3}{c}{Fairness Violation Rate $\downarrow$} \\
\cmidrule(lr){3-5}
& & envy-free & proportional\\
\midrule\midrule
\multirow{3}{*}{2} 
& MultiDiffusion & \err{0.485}{0.233} & \err{0.485}{0.233}\\
& Ours (w/o fairness) & \err{0.196}{0.118} & \err{0.306}{0.215} \\
& Ours & \err{0.015}{0.008} & \err{0.039}{0.050} \\
\midrule
\multirow{3}{*}{3}
& MultiDiffusion & \err{0.845}{0.171} & \err{0.742}{0.217} \\
& Ours (w/o fairness) & \err{0.356}{0.216} & \err{0.440}{0.229}  \\
& Ours & \err{0.017}{0.010} & \err{0.042}{0.080}  \\
\midrule
\bottomrule
\end{tabular}
\label{table: fairness}
\vspace{-5pt}
\end{table}

We find that improvement over baselines is driven by the efficient division of labor. Fairness improves most metrics further. Observe that the importance of fairness increases as more models participate, since the probability of a model being neglected by an efficient (but not fair) allocation grows. To illustrate the effect of the fairness constraint, we provide a qualitative example in Figure~\ref{fig:2}. Table~\ref{table: fairness} reports the fraction of sampling timesteps in which the allocation violates proportional or envy-free fairness when using MultiDiffusion baseline, the unconstrained version of Divide-and-Denoise, and Divide-and-Denoise with proportional fairness constraints. 
Other baselines are omitted since they do not provide explicit allocation required to evaluate fairness, except for the uniform allocation used by the Averaging baseline, which is fair by definition. The fixed allocation used by MultiDiffusion frequently violates fairness constraints, indicating that some models are consistently dominated by others. Even without explicit fairness constraints, our efficient allocation substantially reduces violations. Enforcing proportional fairness reduces them further, with small residual violations coming from numerical approximations in the dual optimization. 

\begin{figure}[t]
\centering
\includegraphics[width=\linewidth]{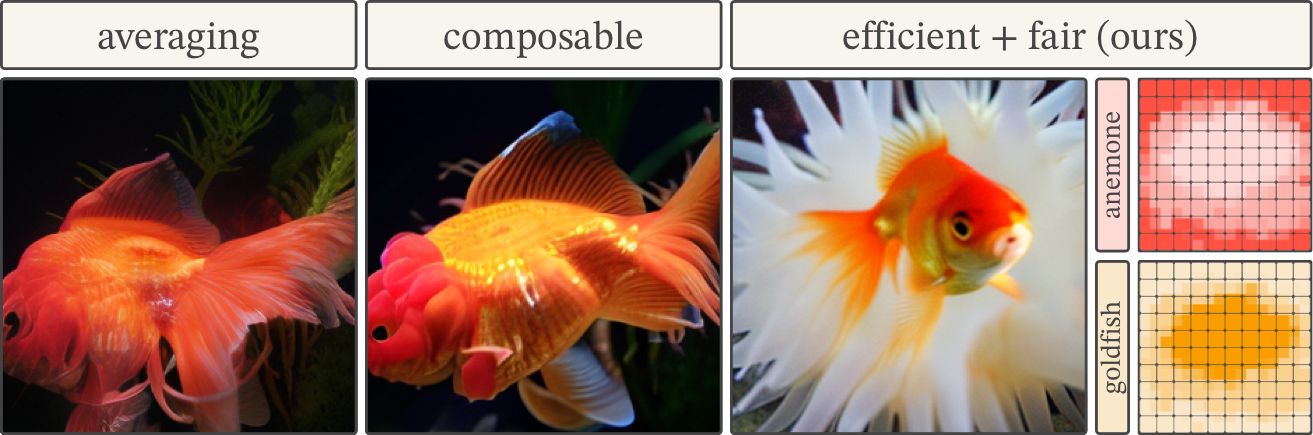}\\
\vspace{0.5em}
\includegraphics[width=\linewidth]{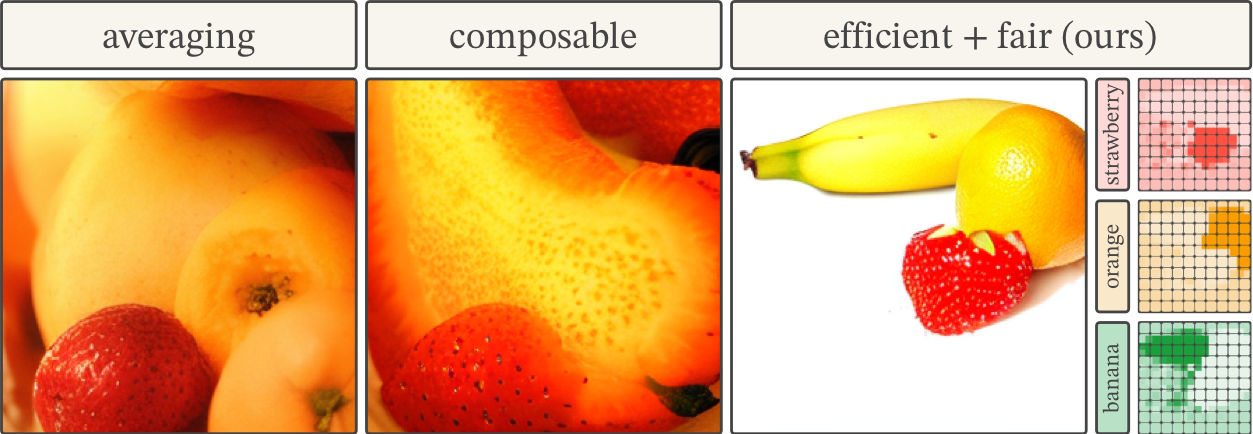}
\caption{Divide-and-Denoise outperforms baselines on ImageNet. Top: \textit{anemone+gold fish}. Bottom: \textit{strawberry+orange+banana}.}
\label{fig:4-5}
\vspace{-5pt}
\end{figure}
\begin{table}[h!]
\centering
\small 
\caption{Performance of Divide-and-Denoise using DiT.}
\label{table: dit}
\begin{tabular}{l c c c c c c}
\toprule \multirow{2}{*}{}
 & \multicolumn{2}{c}{CLIP $\uparrow$} & \multicolumn{2}{c}{Reward $\uparrow$} & \multicolumn{2}{c}{VQA $\uparrow$}\\
\cmidrule(lr){2-3} \cmidrule(lr){4-5} \cmidrule(lr){6-7} 
 & joint & min & joint & min & joint & min \\
\midrule
Avg. & 25.57 & 19.58 & -1.00 & -1.35 & 0.644 & 0.577\\
Comp. &  26.67 & 20.43 & -0.69 & -1.11 & 0.700 & 0.634 \\
Multidiff. & 28.56 & 21.73 & 0.23 & \textbf{-0.46} & 0.860 & 0.804 \\
Ours &  \textbf{29.03} & \textbf{22.01} & \textbf{0.28} & \textbf{-0.46} & \textbf{0.868} & \textbf{0.808} \\
\bottomrule
\end{tabular}
\label{table: imagenet}
\end{table}

\begin{figure}[t]
\vspace{-5pt}
\centering
\includegraphics[width=\linewidth]{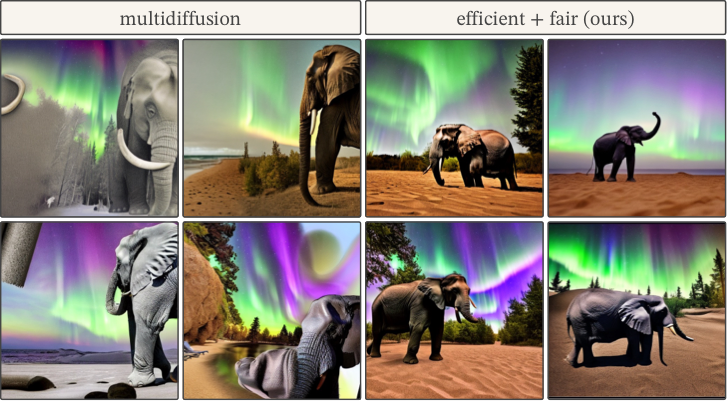}
\caption{Divide-and-Denoise outperforms the fixed allocation of MultiDiffusion when coordinating 3 models with conflicting interests (\textit{sandy beach+aurora borealis+elephant}).}
\vspace{-10pt}
\label{fig: multidiffusion}
\end{figure}
\begin{table*}[t]
\centering
\caption{Performance on coordinating models with descriptive concepts using Stable Diffusion.}
\small
\setlength{\tabcolsep}{2.5pt}
\begin{tabular}{@{}lcccccccc@{}}
\toprule 
\multirow{2}{*}{Coordination Strategy} & \multicolumn{2}{c}{GenEval $\uparrow$} & \multicolumn{2}{c}{CLIP $\uparrow$} & \multicolumn{2}{c}{Reward $\uparrow$} & \multicolumn{2}{c}{VQA $\uparrow$}\\
\cmidrule(lr){2-3} \cmidrule(lr){4-5} \cmidrule(lr){6-7} \cmidrule(lr){8-9} 
 &  \%images & \%prompts & joint & min & joint & min & joint & min \\
\midrule
Averaging & \err{9.00\%}{1.76} & 27\% & \err{28.57}{0.26} & \err{19.81}{0.21} & \err{-0.39}{0.081} & \err{-1.65}{0.051} & \err{0.641}{0.017} & \err{0.500}{0.018}\\
Composable Diffusion & \err{12.25\%}{2.03} & 32\% & \err{29.65}{0.25} & \err{20.37}{0.23} & \err{-0.09}{0.083} & \err{-1.49}{0.056} & \err{0.658}{0.019} & \err{0.522}{0.021} \\
Multi-Concept Diffusion & \err{12.50\%}{2.37} & 30\% & \err{29.86}{0.28} & \err{19.61}{0.26} & \err{0.10}{0.099} & \err{-1.51}{0.064} & \err{0.596}{0.021} & \err{0.455}{0.022} \\
MultiDiffusion & \err{27.00\%}{3.05}& 58\% & \err{30.05}{0.29} & \err{20.53}{0.23} & \err{0.46}{0.092}  &  \err{-1.19}{0.064}& \err{0.733}{0.018} & \err{0.608}{0.022}\\
Ours & \err{\textbf{55.75\%}}{3.44} & \textbf{86\%} & \err{\textbf{32.65}}{0.25} & \err{\textbf{22.62}}{0.17} & \err{\textbf{1.34}}{0.050} & \err{\textbf{-0.56}}{0.060} & \err{\textbf{0.882}}{0.013} & \err{\textbf{0.806}}{0.017}\\
\bottomrule
\end{tabular}
\label{table: color}
\end{table*}

\textbf{DiT.} Each concept here is represented as a one-hot encoded ImageNet class depicting an object. We select $15$ pairs of objects from the ImageNet-1K dataset~\citep{imagenet} and evaluate each pair across $20$ random seeds. 
We test our method's ability to coordinate pairs of models to generate images containing both objects. Quantitative results are presented in Table~\ref{table: dit}, where we compare our method against the Averaging, Composable Diffusion and MultiDiffusion baselines for the $n=2$ setting.

We note that the multi-concept baseline is unavailable here since no single DiT model can condition on multiple classes. In spite of this, Divide-and-Denoise generates images that appear as though they were produced by such a multi-class model. We provide qualitative examples for $n=2$ and $n=3$ objects in Figure~\ref{fig:4-5}.

\subsection{Concepts with Description}
We next evaluate how our method compares to baselines when concepts contain greater detail. This simulates a scenario where a single multi-concept model would typically fail. We attach color descriptions to objects, testing whether our method can faithfully bind attributes to the correct objects. We use Stable Diffusion setup and construct single-concept prompts as earlier, but this time with each concept $\vy \in \mathcal{Y}$ given by a color and object, e.g. ``\texttt{an \textit{orange horse}}''. We generate $2$ object–color descriptions to define a pair of specialized models. In total, we construct $100$ unique pairs and evaluate each pair across $4$ different seeds. Results are presented in Table~\ref{table: color}. Figure~\ref{fig:3} provides a qualitative example of correct attribute binding.

\begin{table}[h!]
\centering
\small 
\setlength{\tabcolsep}{2.5pt}
\caption{Performance of Divide-and-Denoise on coordinating models with conflicting interests using Stable Diffusion.}
\begin{tabular}{l c c c c c c}
\toprule \multirow{2}{*}{}
 & \multicolumn{2}{c}{CLIP $\uparrow$} & \multicolumn{2}{c}{Reward $\uparrow$} & \multicolumn{2}{c}{VQA $\uparrow$}\\
\cmidrule(lr){2-3} \cmidrule(lr){4-5} \cmidrule(lr){6-7} 
 & joint & min & joint & min & joint & min \\
\midrule
Averaging &27.81 & 19.14 & -0.36 & -1.45 & 0.687 & 0.521\\
Composable Diff. &  29.00 & 20.20 & 0.08 & -1.23 & 0.729 & 0.581\\
Multi-Concept Diff. &  29.76 & 19.82 & 0.65 & -0.88 & 0.752 & 0.633 \\
MultiDiffusion &  29.85 & 20.23 & 0.70& -0.85& 0.819& 0.689\\
Ours &  \textbf{31.13} & \textbf{21.23} & \textbf{1.12} & \textbf{-0.55} & \textbf{0.905} & \textbf{0.815} \\
\bottomrule
\end{tabular}
\label{table: ood}
\vspace{-5pt}
\end{table}

\subsection{Concepts with Conflict}
Finally, we evaluate how well Divide-and-Denoise coordinates models with conflicting interests. 
To simulate this setting, we hand-design $40$ concept sets with naturally competing semantics. For example, we condition the first model on the concept ``\texttt{a desert}'', while the second on the concept ``\texttt{a snowy mountain}''. A full list of prompt combinations is provided in Appendix \ref{app:custom}. We use Stable Diffusion setup. As shown in Table~\ref{table: ood}, Divide-and-Denoise outperforms the coordination baselines. Figure~\ref{fig: multidiffusion} compares images generated by our method against its strongest baseline given by MultiDiffusion with a fixed allocation.
By adapting allocations dynamically to model preferences, Divide-and-Denoise produces more diverse spatial layouts and improves overall image quality. Additional examples of generated images can be found in Appendix \ref{app:additional}.
\section{Conclusion}
In this work, we introduced Divide-and-Denoise, a game-theoretic framework for coordinating several pre-trained diffusion models. Our coupled division and denoising processes resolve conflicts between models, prevent concepts or models from being neglected, and outperform baselines across a broad range of metrics including the GenEval benchmark. Notably, our formulation is not tied to a specific model architecture or conditioning mechanism. We achieve  compelling results with both DiT and Stable Diffusion, using score-based utilities for the former and attention-based utilities for the latter. Attention-based utilities extend naturally to other domains, including text-to-graph \cite{chang2025ldmoltexttomoleculediffusionmodel}, text-to-audio \cite{liu2023audioldmtexttoaudiogenerationlatent}, and audio-to-image \cite{biner2024sonicdiffusionaudiodrivenimagegeneration} generation. Our score-based utilities can be further applied in domains without cross-attention conditioning. 
Future work includes designing stronger utility functions, as well as reducing the computational cost of our approach; see Appendix~\ref{app: time} for discussion.
Our results highlight cooperative interaction between pre-trained models as a general recipe for controllable and reusable generative modeling across domains.

\clearpage
\pagebreak

\section*{Acknowledgements}
AG and TJ acknowledge support from the Machine Learning for Pharmaceutical Discovery and Synthesis (MLPDS) consortium, and the NSF Expeditions grant (award 1918839) Understanding the World through code.

PB and SK were supported by ERC grant ODD-ML 101201120, EU funding ELLIOT 101214398 the Research Council of Finland Flagship programme: Finnish Center for Artificial Intelligence FCAI and decision 341763. SK was supported by the UKRI Turing AI World-Leading Researcher Fellowship (EP/W002973/1).

VG acknowledges Saab-WASP (grant 411025), Research Council of Finland (grant 342077), and the Jane and Aatos Erkko Foundation (grant 7001703) for their support.

\section*{Impact Statement}
This paper presents work whose goal is to advance the field of Machine
Learning. There are many potential societal consequences of our work, none
which we feel must be specifically highlighted here.

\bibliography{biblio}

\begin{thebibliography}{50}
\providecommand{\natexlab}[1]{#1}
\providecommand{\url}[1]{\texttt{#1}}
\expandafter\ifx\csname urlstyle\endcsname\relax
  \providecommand{\doi}[1]{doi: #1}\else
  \providecommand{\doi}{doi: \begingroup \urlstyle{rm}\Url}\fi

\bibitem[Ajay et~al.(2023)]{ajay2023decisiondiffuser}
Ajay, A. et~al.
\newblock Is conditional generative modeling all you need for decision-making?
\newblock In \emph{International Conference on Learning Representations (ICLR)}, 2023.

\bibitem[Alamdari et~al.(2023)]{alamdari2023evodiff}
Alamdari, S. et~al.
\newblock Protein generation with evolutionary diffusion: sequence is all you need.
\newblock \emph{bioRxiv preprint bioRxiv:2023.07.12.548620}, 2023.

\bibitem[Amanatidis et~al.(2023)Amanatidis, Aziz, Birmpas, Filos-Ratsikas, Li, Moulin, Voudouris, and Wu]{amanatidis2022fairdivision}
Amanatidis, G., Aziz, H., Birmpas, G., Filos-Ratsikas, A., Li, B., Moulin, H., Voudouris, A.~A., and Wu, X.
\newblock Fair division of indivisible goods: Recent progress and open questions.
\newblock \emph{Artificial Intelligence}, 315:\penalty0 103841, 2023.
\newblock \doi{10.1016/j.artint.2022.103841}.

\bibitem[Bar-Tal et~al.(2023)Bar-Tal, Yariv, Lipman, and Dekel]{bar2023multidiffusion}
Bar-Tal, O., Yariv, L., Lipman, Y., and Dekel, T.
\newblock Multidiffusion: fusing diffusion paths for controlled image generation.
\newblock In \emph{Proceedings of the 40th International Conference on Machine Learning}, pp.\  1737--1752, 2023.

\bibitem[Biner et~al.(2024)Biner, Sofian, Karakaş, Ceylan, Erdem, and Erdem]{biner2024sonicdiffusionaudiodrivenimagegeneration}
Biner, B.~C., Sofian, F.~M., Karakaş, U.~B., Ceylan, D., Erdem, E., and Erdem, A.
\newblock Sonicdiffusion: Audio-driven image generation and editing with pretrained diffusion models, 2024.
\newblock URL \url{https://arxiv.org/abs/2405.00878}.

\bibitem[Burgert et~al.(2024)Burgert, Li, Leite, Ranasinghe, and Ryoo]{burgert2023diffusionillusion}
Burgert, R., Li, X., Leite, A., Ranasinghe, K., and Ryoo, M.
\newblock Diffusion illusions: Hiding images in plain sight.
\newblock In \emph{ACM SIGGRAPH 2024 Conference Papers}, pp.\  1--11, 2024.

\bibitem[Caragiannis et~al.(2019)Caragiannis, Kurokawa, Moulin, Procaccia, Shah, and Wang]{caragiannis2019unreasonable}
Caragiannis, I., Kurokawa, D., Moulin, H., Procaccia, A.~D., Shah, N., and Wang, J.
\newblock The unreasonable fairness of maximum nash welfare.
\newblock \emph{ACM Transactions on Economics and Computation (TEAC)}, 7\penalty0 (3):\penalty0 12:1--12:32, 2019.
\newblock \doi{10.1145/3319729}.

\bibitem[Chang \& Ye(2025)Chang and Ye]{chang2025ldmoltexttomoleculediffusionmodel}
Chang, J. and Ye, J.~C.
\newblock Ldmol: A text-to-molecule diffusion model with structurally informative latent space surpasses ar models.
\newblock In \emph{International Conference on Machine Learning}, pp.\  7374--7392. PMLR, 2025.

\bibitem[Chefer et~al.(2023)Chefer, Alaluf, Vinker, Wolf, and Cohen-Or]{chefer2023attend}
Chefer, H., Alaluf, Y., Vinker, Y., Wolf, L., and Cohen-Or, D.
\newblock Attend-and-excite: Attention-based semantic guidance for text-to-image diffusion models.
\newblock \emph{ACM transactions on Graphics (TOG)}, 42\penalty0 (4):\penalty0 1--10, 2023.

\bibitem[Chen et~al.(2024)Chen, Wang, Wu, Liao, Sun, Yan, and Lin]{chen2023texforce}
Chen, C., Wang, A., Wu, H., Liao, L., Sun, W., Yan, Q., and Lin, W.
\newblock Enhancing diffusion models with text-encoder reinforcement learning.
\newblock In \emph{European Conference on Computer Vision}, pp.\  182--198. Springer, 2024.

\bibitem[Chi et~al.(2025)Chi, Xu, Feng, Cousineau, Du, Burchfiel, Tedrake, and Song]{chi2023diffusionpolicy}
Chi, C., Xu, Z., Feng, S., Cousineau, E., Du, Y., Burchfiel, B., Tedrake, R., and Song, S.
\newblock Diffusion policy: Visuomotor policy learning via action diffusion.
\newblock \emph{The International Journal of Robotics Research}, 44\penalty0 (10-11):\penalty0 1684--1704, 2025.

\bibitem[Cole et~al.(2017)Cole, Devanur, Gkatzelis, Jain, Mai, Vazirani, and Yazdanbod]{cole2017convex}
Cole, R., Devanur, N.~R., Gkatzelis, V., Jain, K., Mai, T., Vazirani, V.~V., and Yazdanbod, S.
\newblock Convex program duality, fisher markets, and nash social welfare.
\newblock In \emph{Proceedings of the 18th ACM Conference on Economics and Computation (EC '17)}, pp.\  459--460. ACM, 2017.
\newblock \doi{10.1145/3033274.3085119}.

\bibitem[Corso et~al.(2023)]{corso2023diffdock}
Corso, G. et~al.
\newblock Diffdock: Diffusion steps, twists, and turns for molecular docking.
\newblock In \emph{International Conference on Learning Representations (ICLR)}, 2023.

\bibitem[Couairon et~al.(2023)Couairon, Careil, Cord, Lathuilière, and Verbeek]{Couairon2023ZeroShotSL}
Couairon, G., Careil, M., Cord, M., Lathuilière, S., and Verbeek, J.
\newblock Zero-shot spatial layout conditioning for text-to-image diffusion models.
\newblock In \emph{ICCV}, 2023.

\bibitem[Dhariwal \& Nichol(2021)Dhariwal and Nichol]{dhariwal2021diffusion}
Dhariwal, P. and Nichol, A.
\newblock Diffusion models beat gans on image synthesis.
\newblock In \emph{Advances in Neural Information Processing Systems (NeurIPS)}, volume~34, pp.\  8780--8794, 2021.

\bibitem[Dickerson et~al.(2014)Dickerson, Goldman, Karp, Procaccia, and Sandholm]{dickerson2014rise}
Dickerson, J.~P., Goldman, J., Karp, J., Procaccia, A.~D., and Sandholm, T.
\newblock The computational rise and fall of fairness.
\newblock In \emph{Proceedings of the Twenty-Eighth AAAI Conference on Artificial Intelligence (AAAI '14)}, pp.\  1405--1411. AAAI Press, 2014.

\bibitem[Du et~al.(2023)Du, Durkan, Strudel, Tenenbaum, Dieleman, Fergus, Sohl-Dickstein, Doucet, and Grathwohl]{du2023reduce}
Du, Y., Durkan, C., Strudel, R., Tenenbaum, J.~B., Dieleman, S., Fergus, R., Sohl-Dickstein, J., Doucet, A., and Grathwohl, W.~S.
\newblock Reduce, reuse, recycle: Compositional generation with energy-based diffusion models and mcmc.
\newblock In \emph{International conference on machine learning}, pp.\  8489--8510. PMLR, 2023.

\bibitem[Eisenberg \& Gale(1959)Eisenberg and Gale]{eisenberg1959consensus}
Eisenberg, E. and Gale, D.
\newblock Consensus of subjective probabilities: The pari-mutuel method.
\newblock \emph{Annals of Mathematical Statistics}, 30\penalty0 (1):\penalty0 165--168, 1959.
\newblock \doi{10.1214/aoms/1177706379}.

\bibitem[Fan et~al.(2023)Fan, Watkins, Du, Liu, Ryu, Boutilier, Abbeel, Ghavamzadeh, Lee, and Lee]{black2023dpok}
Fan, Y., Watkins, O., Du, Y., Liu, H., Ryu, M., Boutilier, C., Abbeel, P., Ghavamzadeh, M., Lee, K., and Lee, K.
\newblock Dpok: Reinforcement learning for fine-tuning text-to-image diffusion models.
\newblock \emph{Advances in Neural Information Processing Systems}, 36:\penalty0 79858--79885, 2023.

\bibitem[Garipov et~al.(2023)Garipov, De~Peuter, Yang, Garg, Kaski, and Jaakkola]{garipov2023compositional}
Garipov, T., De~Peuter, S., Yang, G., Garg, V., Kaski, S., and Jaakkola, T.
\newblock Compositional sculpting of iterative generative processes.
\newblock \emph{Advances in neural information processing systems}, 36:\penalty0 12665--12702, 2023.

\bibitem[Geng et~al.(2024)]{geng2024visualanagrams}
Geng, D. et~al.
\newblock Visual anagrams: Generating multi-view optical illusions with diffusion models.
\newblock In \emph{Proceedings of the IEEE/CVF Conference on Computer Vision and Pattern Recognition (CVPR)}, 2024.

\bibitem[Ghosh et~al.(2023)Ghosh, Hajishirzi, and Schmidt]{ghosh2023geneval}
Ghosh, D., Hajishirzi, H., and Schmidt, L.
\newblock Geneval: An object-focused framework for evaluating text-to-image alignment.
\newblock \emph{Advances in Neural Information Processing Systems}, 36:\penalty0 52132--52152, 2023.

\bibitem[Hertz et~al.()Hertz, Mokady, Tenenbaum, Aberman, Pritch, and Cohen-Or]{hertz2022prompt}
Hertz, A., Mokady, R., Tenenbaum, J., Aberman, K., Pritch, Y., and Cohen-Or, D.
\newblock Prompt-to-prompt image editing with cross-attention control.
\newblock In \emph{The Eleventh International Conference on Learning Representations}.

\bibitem[Ho \& Salimans(2022)Ho and Salimans]{ho2022classifierfree}
Ho, J. and Salimans, T.
\newblock Classifier-free diffusion guidance.
\newblock \emph{arXiv preprint arXiv:2207.12598}, 2022.

\bibitem[Ho et~al.(2020{\natexlab{a}})Ho, Jain, and Abbeel]{ho2020ddpm}
Ho, J., Jain, A., and Abbeel, P.
\newblock Denoising diffusion probabilistic models.
\newblock In \emph{Advances in Neural Information Processing Systems (NeurIPS)}, 2020{\natexlab{a}}.

\bibitem[Ho et~al.(2020{\natexlab{b}})Ho, Jain, and Abbeel]{ho2020denoisingdiffusionprobabilisticmodels}
Ho, J., Jain, A., and Abbeel, P.
\newblock Denoising diffusion probabilistic models.
\newblock \emph{Advances in neural information processing systems}, 33:\penalty0 6840--6851, 2020{\natexlab{b}}.

\bibitem[Jumper et~al.(2021)]{jumper2021alphafold}
Jumper, J. et~al.
\newblock Highly accurate protein structure prediction with alphafold.
\newblock \emph{Nature}, 596:\penalty0 583--589, 2021.

\bibitem[Lin et~al.(2014)Lin, Maire, Belongie, Hays, Perona, Ramanan, Doll{\'a}r, and Zitnick]{lin2014microsoft}
Lin, T.-Y., Maire, M., Belongie, S., Hays, J., Perona, P., Ramanan, D., Doll{\'a}r, P., and Zitnick, C.~L.
\newblock Microsoft coco: Common objects in context.
\newblock In \emph{European conference on computer vision}, pp.\  740--755. Springer, 2014.

\bibitem[Lin et~al.(2024)Lin, Pathak, Li, Li, Xia, Neubig, Zhang, and Ramanan]{vqa}
Lin, Z., Pathak, D., Li, B., Li, J., Xia, X., Neubig, G., Zhang, P., and Ramanan, D.
\newblock Evaluating text-to-visual generation with image-to-text generation.
\newblock In \emph{European Conference on Computer Vision}, pp.\  366--384. Springer, 2024.

\bibitem[Liu et~al.(2023)Liu, Chen, Yuan, Mei, Liu, Mandic, Wang, and Plumbley]{liu2023audioldmtexttoaudiogenerationlatent}
Liu, H., Chen, Z., Yuan, Y., Mei, X., Liu, X., Mandic, D., Wang, W., and Plumbley, M.
\newblock Audioldm: Text-to-audio generation with latent diffusion models.
\newblock In \emph{Proceedings of the 40th International Conference on Machine Learning, PMLR 2023}, volume 202, pp.\  21450--21474. International Machine Learning Society (IMLS), 2023.

\bibitem[Liu et~al.(2022)Liu, Li, Du, Torralba, and Tenenbaum]{liu2022compositional}
Liu, N., Li, S., Du, Y., Torralba, A., and Tenenbaum, J.~B.
\newblock Compositional visual generation with composable diffusion models.
\newblock In \emph{European conference on computer vision}, pp.\  423--439. Springer, 2022.

\bibitem[Manukyan et~al.(2025)Manukyan, Sargsyan, Atanyan, Wang, Navasardyan, and Shi]{manukyan2023hd}
Manukyan, H., Sargsyan, A., Atanyan, B., Wang, Z., Navasardyan, S., and Shi, H.
\newblock Hd-painter: high-resolution and prompt-faithful text-guided image inpainting with diffusion models.
\newblock In \emph{International Conference on Learning Representations}, volume 2025, pp.\  96301--96330, 2025.

\bibitem[Nichol et~al.(2022)Nichol, Dhariwal, Ramesh, Shyam, Mishkin, Mcgrew, Sutskever, and Chen]{nichol2021glide}
Nichol, A.~Q., Dhariwal, P., Ramesh, A., Shyam, P., Mishkin, P., Mcgrew, B., Sutskever, I., and Chen, M.
\newblock Glide: Towards photorealistic image generation and editing with text-guided diffusion models.
\newblock In \emph{International Conference on Machine Learning}, pp.\  16784--16804. PMLR, 2022.

\bibitem[Nishimura \& Sumita(2021)Nishimura and Sumita]{nishimura2021envyfreeness}
Nishimura, K. and Sumita, H.
\newblock Envy-freeness and maximum nash welfare for mixed divisible and indivisible goods.
\newblock In \emph{Proceedings of the 22nd ACM Conference on Economics and Computation (EC '21)}, pp.\  650--670, 2021.
\newblock \doi{10.1145/3465456.3467644}.

\bibitem[Pearce et~al.(2023)Pearce, Rashid, Kanervisto, Bignell, Sun, Georgescu, Macua, Tan, Momennejad, Hofmann, et~al.]{sun2023imitating}
Pearce, T., Rashid, T., Kanervisto, A., Bignell, D., Sun, M., Georgescu, R., Macua, S.~V., Tan, S.~Z., Momennejad, I., Hofmann, K., et~al.
\newblock Imitating human behaviour with diffusion models.
\newblock In \emph{11th International Conference on Learning Representations, ICLR 2023}, 2023.

\bibitem[Peebles \& Xie(2023)Peebles and Xie]{peebles2023dit}
Peebles, W. and Xie, S.
\newblock Scalable diffusion models with transformers.
\newblock In \emph{Proceedings of the IEEE/CVF international conference on computer vision}, pp.\  4195--4205, 2023.

\bibitem[Radford et~al.(2021)Radford, Kim, Hallacy, Ramesh, Goh, Agarwal, Sastry, Askell, Mishkin, Clark, et~al.]{clip}
Radford, A., Kim, J.~W., Hallacy, C., Ramesh, A., Goh, G., Agarwal, S., Sastry, G., Askell, A., Mishkin, P., Clark, J., et~al.
\newblock Learning transferable visual models from natural language supervision.
\newblock In \emph{International conference on machine learning}, pp.\  8748--8763. PmLR, 2021.

\bibitem[Rombach et~al.(2022)Rombach, Blattmann, Lorenz, Esser, and Ommer]{rombach2022ldm}
Rombach, R., Blattmann, A., Lorenz, D., Esser, P., and Ommer, B.
\newblock High-resolution image synthesis with latent diffusion models.
\newblock In \emph{Proceedings of the IEEE/CVF Conference on Computer Vision and Pattern Recognition (CVPR)}, pp.\  10684--10695, 2022.

\bibitem[Russakovsky et~al.(2014)Russakovsky, Deng, Su, Krause, Satheesh, Ma, Huang, Karpathy, Khosla, Bernstein, Berg, and Fei{-}Fei]{imagenet}
Russakovsky, O., Deng, J., Su, H., Krause, J., Satheesh, S., Ma, S., Huang, Z., Karpathy, A., Khosla, A., Bernstein, M.~S., Berg, A.~C., and Fei{-}Fei, L.
\newblock Imagenet large scale visual recognition challenge.
\newblock \emph{CoRR}, abs/1409.0575, 2014.
\newblock URL \url{http://arxiv.org/abs/1409.0575}.

\bibitem[Saharia et~al.(2022)Saharia, Chan, Saxena, Li, Whang, Denton, Ghasemipour, Gontijo~Lopes, Karagol~Ayan, Salimans, et~al.]{saharia2022imagen}
Saharia, C., Chan, W., Saxena, S., Li, L., Whang, J., Denton, E.~L., Ghasemipour, K., Gontijo~Lopes, R., Karagol~Ayan, B., Salimans, T., et~al.
\newblock Photorealistic text-to-image diffusion models with deep language understanding.
\newblock \emph{Advances in neural information processing systems}, 35:\penalty0 36479--36494, 2022.

\bibitem[Sehwag et~al.(2025)Sehwag, Kong, Li, Spranger, and Lyu]{sehwag2024microbudget}
Sehwag, V., Kong, X., Li, J., Spranger, M., and Lyu, L.
\newblock Stretching each dollar: Diffusion training from scratch on a micro-budget.
\newblock In \emph{Proceedings of the IEEE/CVF Conference on Computer Vision and Pattern Recognition}, pp.\  28596--28608, 2025.

\bibitem[Skreta et~al.(2025)Skreta, Atanackovic, Bose, Tong, and Neklyudov]{skreta2024superposition}
Skreta, M., Atanackovic, L., Bose, J., Tong, A., and Neklyudov, K.
\newblock The superposition of diffusion models using the it{\^o} density estimator.
\newblock In \emph{International Conference on Learning Representations}, volume 2025, pp.\  67004--67057, 2025.

\bibitem[Song et~al.()Song, Meng, and Ermon]{song2022denoisingdiffusionimplicitmodels}
Song, J., Meng, C., and Ermon, S.
\newblock Denoising diffusion implicit models.
\newblock In \emph{International Conference on Learning Representations}.

\bibitem[Song et~al.(2021)Song, Meng, and Ermon]{DBLP:conf/iclr/SongME21}
Song, J., Meng, C., and Ermon, S.
\newblock Denoising diffusion implicit models.
\newblock In \emph{9th International Conference on Learning Representations, {ICLR} 2021, Virtual Event, Austria, May 3-7, 2021}. OpenReview.net, 2021.
\newblock URL \url{https://openreview.net/forum?id=St1giarCHLP}.

\bibitem[Song \& Ermon(2019)Song and Ermon]{song2019scorematching}
Song, Y. and Ermon, S.
\newblock Generative modeling by estimating gradients of the data distribution.
\newblock In \emph{Advances in Neural Information Processing Systems (NeurIPS)}, 2019.

\bibitem[Tang et~al.(2023)Tang, Liu, Pandey, Jiang, Yang, Kumar, Stenetorp, Lin, and T{\"u}re]{tang-etal-2023-daam}
Tang, R., Liu, L., Pandey, A., Jiang, Z., Yang, G., Kumar, K., Stenetorp, P., Lin, J., and T{\"u}re, F.
\newblock What the daam: Interpreting stable diffusion using cross attention.
\newblock In \emph{Proceedings of the 61st Annual Meeting of the Association for Computational Linguistics (Volume 1: Long Papers)}, pp.\  5644--5659, 2023.

\bibitem[Vaswani et~al.(2017)Vaswani, Shazeer, Parmar, Uszkoreit, Jones, Gomez, Kaiser, and Polosukhin]{vaswani2017attention}
Vaswani, A., Shazeer, N., Parmar, N., Uszkoreit, J., Jones, L., Gomez, A.~N., Kaiser, {\L}., and Polosukhin, I.
\newblock Attention is all you need.
\newblock In \emph{Advances in Neural Information Processing Systems (NeurIPS)}, volume~30, pp.\  6000--6010, 2017.

\bibitem[Wallace et~al.(2024)]{wallace2024diffusiondpo}
Wallace, B. et~al.
\newblock Diffusion model alignment using direct preference optimization.
\newblock In \emph{Proceedings of the IEEE/CVF Conference on Computer Vision and Pattern Recognition (CVPR)}, 2024.

\bibitem[Xu et~al.(2023)Xu, Liu, Wu, Tong, Li, Ding, Tang, and Dong]{xu2023imagereward}
Xu, J., Liu, X., Wu, Y., Tong, Y., Li, Q., Ding, M., Tang, J., and Dong, Y.
\newblock Imagereward: Learning and evaluating human preferences for text-to-image generation.
\newblock \emph{Advances in Neural Information Processing Systems}, 36:\penalty0 15903--15935, 2023.

\bibitem[Xu et~al.(2024)]{xu2024octo}
Xu, Z. et~al.
\newblock Octo: An open-source generalist robot policy.
\newblock \emph{arXiv preprint arXiv:2405.12213}, 2024.

\end{thebibliography}
\bibliographystyle{icml2026}

\newpage
\appendix
\onecolumn
\section{Appendix}
\subsection{Proofs of the Theorems}
In our proofs, we will rely on the following technical fact:
\begin{lemma}
\label{lemma : main lemma}
Let $\pi'$ be a probability distribution over $\mathcal Z$, let $\alpha > 0$, and let $f : \mathcal Z \to \mathbb R$ be a function such that \[ \int \exp(f(z)/\alpha)\,\pi'(z)dz < \infty. \]

Then the solution $\pi^*$ to an unconstrained optimization problem 
    \[
    \max_{\pi} \E_{z \sim \pi} f(z) - \alpha \KL(\pi || \pi')
    \]
    is given by 
    \[
    \pi^*(z) = \frac{\exp(f(z) / \alpha) \pi'(z)}{\int \exp(f(z) / \alpha) \pi'(z) dz}.
    \]
\end{lemma}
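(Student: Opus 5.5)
The plan is the standard Gibbs variational argument. We rewrite the objective as a single negative KL divergence plus a constant, and then use nonnegativity of KL. Write $C \coloneqq \int \exp(f(z)/\alpha)\,\pi'(z)\,dz$. This is finite by assumption and strictly positive, since the integrand is positive $\pi'$-a.e. Hence $\pi^*(z) = \exp(f(z)/\alpha)\pi'(z)/C$ is a well-defined probability distribution, absolutely continuous with respect to $\pi'$, and mutually so.

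First, restrict attention to $\pi \ll \pi'$. Otherwise $\KL(\pi\|\pi') = +\infty$, the objective equals $-\infty$, and such $\pi$ cannot be optimal because $\pi^*$ attains a finite value (checked below). For $\pi \ll \pi'$ with $\E_\pi f$ well defined, we have
\[
\log\frac{\pi(z)}{\pi'(z)} = \log\frac{\pi(z)}{\pi^*(z)} + \frac{f(z)}{\alpha} - \log C .
\]
Taking $\E_{z\sim\pi}$ and multiplying by $\alpha$ gives the identity
\[
\E_{z\sim\pi} f(z) - \alpha\KL(\pi\|\pi') = \alpha \log C - \alpha \KL(\pi\|\pi^*).
\]
Since $\KL(\pi\|\pi^*)\ge 0$, with equality iff $\pi = \pi^*$ almost everywhere (Gibbs' inequality), the objective is at most $\alpha\log C$. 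This value is attained exactly at $\pi^*$, which gives both optimality and uniqueness.

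The main obstacle is the integrability bookkeeping needed to avoid manipulating expressions of the form $\infty-\infty$. The identity requires $\E_\pi f$ and $\KL(\pi\|\pi')$ not to be simultaneously infinite in conflicting ways. I would handle this by splitting into cases. If $\KL(\pi\|\pi')=\infty$ and $\E_\pi f<\infty$, the objective is $-\infty$. If $\KL(\pi\|\pi')<\infty$, then the Donsker–Varadhan/Fenchel inequality $\E_\pi[f/\alpha] \le \KL(\pi\|\pi') + \log C$ shows $\E_\pi f^+<\infty$. The identity then holds in the extended reals, with both sides possibly $-\infty$. For $\pi^*$ itself, $\KL(\pi^*\|\pi') = \E_{\pi^*}[f/\alpha] - \log C$, and one checks this is finite, or at least that the objective value is $\alpha\log C$ directly. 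In the paper's application the space $\mathcal Z$ is finite, namely $\sM_{n,m}$ or a product of coordinate marginals, so these issues disappear. I would state the general case as above and note this simplification.
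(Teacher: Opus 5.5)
Your argument is correct and matches the paper's: its entire proof is the identity $\alpha\KL(\pi\|\pi^*) = -\E_{z\sim\pi} f(z) + \alpha\KL(\pi\|\pi') + \alpha\log C$ (with the constant left implicit), followed by nonnegativity of KL. Your integrability bookkeeping goes beyond the paper, but note that under only the stated assumption $\KL(\pi^*\|\pi')$ can genuinely be $+\infty$ (e.g.\ when $\int (f/\alpha)e^{f/\alpha}\,d\pi' = \infty$), so "one checks this is finite" can fail; it is your fallback of evaluating the objective through the combined integrand $\E_\pi[f - \alpha\log(d\pi/d\pi')]$ that gives the value $\alpha\log C$ at $\pi^*$.
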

\begin{proof}
    It is sufficient to notice that
    \[
     \alpha \KL(\pi || \pi^*) = -\E_{z \sim  \pi} f(z) + \alpha \KL(\pi || \pi') + C,
    \]
    where $C$ is a constant that does not depend on $\pi$.
\end{proof}

\begin{proof}[Proof of Theorem~\ref{theorem: p update}]
Recall that denoising kernels of individual models are Gaussians with the same covariance:
\[
p^i(\cdot \mid \rvx_t) = \mathcal N(\mu^i_t(\rvx_{t}), \sigma_t^2I).
\]
Denote the marginal distribution of $p^i$ corresponding to the latent feature $j$ as
\(
p^i_j(\cdot \mid \rvx_t) = \mathcal N(\mu^i_j(\rvx_{t}), \sigma_t^2)\).

For an allocation $Q$ with weights $Q_{ij}$, consider a distribution $p^Q_t$ defined as
\[
p^Q_t(\rvx | \rvx_t) \propto \prod_{i = 1}^n \prod_{j = 1}^m p^i_j(\rvx^j| \rvx_t)^{Q_{ij}}.
\]

Notice that
\begin{align*}
p^Q_t(\rvx | \rvx_t) &\propto \prod_{i = 1}^n \prod_{j = 1}^m p^i_j(\rvx^j | \rvx_t)^{Q_{ij}} \\&= \prod_{i = 1}^n \prod_{j = 1}^m \exp\left(\frac{-Q_{ij} \|\rvx^{j} - \mu_j^i(\rvx_t)\|^2}{2\sigma_t^2}\right) \\&= \exp\left(\frac{-\sum_{i = 1}^n \sum_{j = 1}^m Q_{ij} \|\rvx^{j} - \mu_j^i(\rvx_t)\|^2} {2\sigma_t^2}\right) \\&\propto \exp\left(\frac{-\sum_{i = 1}^n \sum_{j = 1}^m [Q_{ij} \|\rvx^{j}\|^2 - 2  \langle \rvx^{j}, \mu_j^i(\rvx_t)\rangle ]} {2\sigma_t^2}\right) \\&=\exp\left(\frac{- \sum_{j = 1}^m  \|\rvx^{j}\|^2 + 2 \sum_{j = 1}^m \langle \rvx^j, \sum_{i = 1}^n Q_{ij} \mu^i_j(\rvx_t)\rangle} {2\sigma_t^2}\right) \\&=\exp\left(\frac{-   \|\rvx\|^2 + 2 \langle \rvx, \sum_{i = 1}^n Q_{i} \odot \mu^i(\rvx_t)\rangle} {2\sigma_t^2}\right),
\end{align*}
where $\odot$ denotes feature-wise product, that is $[Q_{i} \odot \mu^i(\rvx_t)]_j = Q_{ij} \mu^i_j(\rvx_t)$. Thus, we obtain
\[
p^Q_t(\cdot| \rvx_t) = \mathcal N\left(\mu^Q_t(\rvx_t), {\sigma_t^2} I\right) \quad \text{with} \quad \mu^Q_t(\rvx_t) = \sum_{i = 1}^n  Q_{i} \odot \mu_t^i(\rvx_t).
\]

By factorization assumption, for any distribution $p \in \sP_t$, it holds that
\(
p(\rvx) = \prod_{j = 1}^m p_j(\rvx^j) \).

Therefore, the following equality holds
\begin{align*}
\E_{\rmM \in Q} \sum_{i = 1}^n\sum_{j = 1}^m \rmM_{i, j} \KL(p_{j}(\cdot) || p^{i}_{j}(\cdot | \rvx_t)) &= \sum_{i = 1}^n\sum_{j = 1}^m Q_{i j} \KL(p_{j}(\cdot ) || p^{i}_{j}(\cdot | \rvx_t)) \\&= -\sum_{j = 1}^m \left[\sum_{i = 1}^n Q_{ij} \int p_{j}(\rvx^j) \log  p^{i}_{j}(\rvx^j | \rvx_t) d\rvx^j - \sum_{i = 1}^n Q_{ij} H(p_{j})\right] \\&=- \int p(\rvx) \log \prod_{i = 1}^n \prod_{j = 1}^n  p^{i}_{j}(\rvx^j | \rvx_t)^{Q_{ij}} d\rvx - \sum_{j = 1}^m H(p_{j})\\&= -\int p(\rvx) \log p^{Q}_t(\rvx | \rvx_t) d\rvx -  H(p) + C\\&= \KL(p||p_t^Q(\cdot|\rvx_t)) + C,
\end{align*}
where $C$ is a constant that does not depend on $p$.

By Lemma~\ref{lemma : main lemma}, the composite kernel $p^c_t$ maximizing $\mathcal F_t(p, Q)$ is given by
\begin{equation}
\label{eq : general solution p}
    p^c_t(\rvx_{t - 1}| \rvx_t) \propto \exp(U_{t -1}(\rvx_{t - 1}, Q) / \alpha_t) p^Q_t(\rvx_{t - 1} |\rvx_t).
\end{equation}

Since $U_t(\rvx, Q)$ is linear jointly in $t$ and $\rvx$, we have 
\[U_{t -1}(\rvx_{t - 1}, Q) = U_t(\rvx_{t}, Q) + A(\rvx_{t - 1} - \rvx_{t}) - b,\]
where $A = \nabla_{\rvx} U_t(\rvx_{t}, Q)$ and $b = \nabla_{t} U_t(\rvx_{t}, Q)$.

Substituting in \eqref{eq : general solution p}, we obtain
\begin{align*}
p^c_t(\rvx_{t - 1}| \rvx_t) &\propto \exp(U_{t -1}(\rvx_{t - 1}, Q) / \alpha_t) p^Q_t(\rvx_{t - 1} |\rvx_t) \\&= \exp\left(\frac{[U_t(\rvx_{t}, Q) + A(\rvx_{t - 1} - \rvx_{t}) - b]}{\alpha_t}\right)  p^Q_t(\rvx_{t - 1} |\rvx_t)\\&\propto \exp\left(\frac{ A \rvx_{t - 1} }{\alpha_t} + \frac{-\|\rvx_{t -1}\|^2 + 2\langle \rvx_{t -1}, \mu^Q_t(\rvx_t)\rangle }{2 \sigma_t^2}\right) \\&= \exp\left(\frac{-\|\rvx_{t -1}\|^2 + 2\langle \rvx_{t -1}, \mu^Q_t(\rvx_t) + \sigma_t^2  A / \alpha_t\rangle }{2 \sigma_t^2}\right).
\end{align*}

We conclude that \(
p^c_t(\cdot| \rvx_t) = \mathcal N(\mu^c_t, \sigma_t^2I)
\) with 
\(
\mu^c_t = \sum_{i = 1}^n \mu^{i}_t(\rvx_t)\odot  Q_i + \frac{\sigma_t^2}{\alpha_t} \nabla_{\rvx_{t}} U_{t}(\rvx_{t}, Q)
\).
\end{proof}

\begin{proof}[Proof of Theorem~\ref{theorem: q update}]
Substituting definition of the efficiency functional $\mathcal G$, \eqref{eq: total payoff}, in the fair division game in \eqref{eq: second optimization}, we obtain  the following optimization problem
\begin{equation}
\label{eq: game eq with payoffs}
Q_t = \argmax_{Q \in \sQ_t(\rvx_t)} \E_{\rmM \sim Q} \left[\sum_{i = 1}^n  \sum_{j = 1}^m \rmM_{i,j} u_{ij}(\rvx_t, t) \right] - \beta_t \KL(Q||Q_{t +1}),
\end{equation}
where 
\(
\sQ_t = \left\{Q \in \Delta(\sM_{n, m}): \E_{\rmM \sim Q} \sum_{i = 1}^n  \sum_{j = 1}^m \rmM_{i,j} \phi_{ij} \preceq \vb  \right\} \). Below, we will write $u_{ij}$ meaning $u_{ij}(\rvx_t, t)$.

By the proof of Lemma~\ref{lemma : main lemma}, the problem in \eqref{eq: game eq with payoffs} is equivalent to
\begin{equation}
\label{eq: primal problem}
Q_t = \argmin_{Q \in \sQ_t(\rvx_t)} \KL (Q || Q^*),
\end{equation}
where
\[
Q^*(\rmM) \propto \exp\left(\sum_{i = 1}^n  \sum_{j = 1}^m \rmM_{i,j} u_{ij}/ \beta\right) Q_{t + 1} (\rmM).
\]
\end{proof}

Assuming that $Q_{t + 1}$ is decomposable with weights $Q_{ij}^{t  +1}$, we find that
\[
Q^*(\rmM) \propto \prod_{i = 1}^n  \prod_{j = 1}^m\left(e^{u_{ij} / \beta_t} Q_{ij}^{t  +1}\right)^{\rmM_{i,j}},
\]
and thus, the allocation $Q^*(\rmM)$ is decomposable with weights
\[
Q_{ij}^* = \frac{e^{u_{ij} / \beta_t} Q_{ij}^{t  +1}}{\sum_{i = 1}^n e^{u_{ij}/ \beta_t} Q_{ij}^{t  +1}}.
\]

We will solve the primal optimization problem in \eqref{eq: primal problem} in its dual form. Let $\phi(\rmM) = \sum_{i = 1}^n  \sum_{j = 1}^m \rmM_{i,j} \phi_{ij}$.
Assuming the set $\sQ$ is non-empty (which is always the case for fairness constraints), the corresponding Lagrangian is
\[
\max_{\lambda \geq 0, \nu} \min_{Q} \mathcal L(Q, \lambda, \nu),
\]
where
\[
\mathcal L(Q, \lambda, \nu) = \KL(Q || Q^*) + \lambda \left(\E_{\rmM \sim Q} \phi(\rmM) - \vb \right) + \nu \left(\sum_{\rmM \in \sM_{n, m}} Q(\rmM) - 1\right).
\]

Taking derivative with respect to $Q(\rmM)$ we obtain
\[
\frac{\partial \mathcal L(Q(\rmM), \lambda, \nu)}{\partial Q(\rmM)} = \log Q(\rmM) + 1 - \log Q^*(\rmM) + \lambda \phi(\rmM) + \nu = 0,
\]
and thus, 
\[
Q(\rmM) = \frac{Q^*(\rmM) \exp(-\lambda \phi(\rmM))}{\exp(\nu + 1)}.
\]
We can now plug it into the Lagrangian and take derivative with respect to $\nu$:
\[
\frac{\partial \mathcal L(Q(\rmM), \lambda, \nu)}{\partial \nu} = \sum_{\rmM} \frac{Q^*(\rmM) \exp(-\lambda \phi(\rmM))}{\exp(\nu + 1)} - 1 = 0.
\]

Hence, we have
\[
Q(\rmM) = \frac{Q^*(\rmM) \exp(-\lambda \phi(\rmM))}{Z_{\lambda}}
\]
with $Z_{\lambda} = \sum_{\rmM} Q^*(\rmM) \exp(-\lambda \phi(\rmM))$ and the dual problem reads
\[
\max_{\lambda \geq 0} - \log(Z_{\lambda}) - \langle \vb, \lambda\rangle.
\]

Let $\lambda^*$ be a solution to the dual problem. The optimal allocation is expressed as
\[
Q_t(\rmM) = \frac{Q^*(\rmM) \exp(-\lambda^* \phi(\rmM))}{Z_{\lambda^*}} \propto \prod_{i = 1}^n\prod_{j = 1}^m \left(Q^*_{ij} e^{-\langle\lambda^*, \phi_{ij}\rangle}\right)^{ \rmM_{i, j}}.
\]
Hence, it is also decomposable with weights
\[
Q_{ij} = \frac{e^{u_{ij} / \beta_t} e^{-\langle\lambda^*, \phi_{ij}\rangle} Q_{ij}^{t  +1}}{Z_j(\lambda^*)}
\]
where
\[
Z_j(\lambda^*) = \sum_{i = 1}^n \exp\left( - \langle \lambda^{*}, \phi_{ij}\rangle\right)  \exp(u_{ij}/\beta) Q^{t + 1}_{ij}.
\]

We conclude the proof by noting that $Z_{\lambda^*} = \prod_{j = 1}^m Z_j(\lambda^*)$.

\subsection{Score-Based Utilities}
\label{app:score-based-utilities}

We derive the simplified form of the score-based utility for feature $j$ under the classifier-free guidance setup. As in the main text, we view a latent as a feature map $\rvx\in\mathbb{R}^{m\times c}$ with features $\rvx^j\in\mathbb{R}^c$, and write $s^j_t\in\mathbb{R}^c$ for the corresponding block of the score function. For a vector $v$, let  $\overline v=v/\|v\|_2$, where $\|\cdot\|_2$ is the Euclidean norm over all features and channels, and $\langle a,b\rangle_{\rmM_i}=\sum_{j=1}^m \rmM_{i,j}\,(a^j)^{\top}b^j$ is the inner product restricted to the features selected by $\rmM_i$. Writing $g_i=\overline{\nabla_{\rvx} q_t(\vy_i\mid \rvx;\theta_i)}$ with blocks $g^j_i\in\mathbb{R}^c$, the utility of player $i$ for a bundle $\rmM_{i'}$ is
\[
  u_i(\rmM_{i'})=\langle g_i,g_i\rangle_{\rmM_{i'}}
                =\sum_{j=1}^m \rmM_{i',j}\,\|g^j_i\|_2^2 .
\]
This satisfies the additive assumption we posed on utilities, with $u_i(\rmM_{i'})=\sum_{j=1}^m \rmM_{i',j}\,u_{ij}(\rvx,t)$ and $u_{ij}(\rvx,t)=\|g^j_i\|_2^2$, the energy of the normalized classifier gradient carried by feature $j$.

In the classifier-free guidance setup, the conditional score is approximated as
$\nabla_{\rvx} \log q_t(\rvx \mid \vy_i;\theta_i) \approx s_t(\rvx, \vy_i;\theta_i)$.
The gradient of a density is related to its score by
$\nabla_{\rvx} q_t(\rvx;\theta_i) = q_t(\rvx;\theta_i)\, s_t(\rvx;\theta_i)$.
Using Bayes' rule on the log-densities,
\[
  \nabla_{\rvx} \log q_t(\vy_i\mid \rvx;\theta_i)
   = \nabla_{\rvx} \log q_t(\rvx\mid \vy_i;\theta_i)
     - \nabla_{\rvx} \log q_t(\rvx;\theta_i),
\]
so the gradient of the posterior density is
\[
  \nabla_{\rvx} q_t(\vy_i\mid \rvx;\theta_i)
   = q_t(\vy_i\mid \rvx;\theta_i)\, \nabla_{\rvx} \log q_t(\vy_i\mid \rvx;\theta_i)
   = q_t(\vy_i\mid \rvx;\theta_i)\,
     \big[s_t(\rvx, \vy_i;\theta_i) - s_t(\rvx;\theta_i)\big].
\]
Let $\Delta s_t(\rvx,\vy_i;\theta_i) = s_t(\rvx,\vy_i;\theta_i) - s_t(\rvx;\theta_i)$
denote the score difference, with $j$-th block
$\Delta s^j_t\in\mathbb{R}^c$. Normalizing the gradient, the scalar
$q_t(\vy_i\mid \rvx;\theta_i)$ cancels:
\[
  g_i = \overline{\nabla_{\rvx} q_t(\vy_i\mid \rvx;\theta_i)}
      = \frac{q_t(\vy_i\mid \rvx;\theta_i)\,\Delta s_t(\rvx,\vy_i;\theta_i)}
             {q_t(\vy_i\mid \rvx;\theta_i)\,\|\Delta s_t(\rvx,\vy_i;\theta_i)\|_2}
      = \frac{\Delta s_t(\rvx,\vy_i;\theta_i)}
             {\|\Delta s_t(\rvx,\vy_i;\theta_i)\|_2}.
\]
Taking the $j$-th block, $g^j_i = \Delta s^j_t / \|\Delta s_t\|_2$, so the
utility for feature $j$ is
\[
  u_{ij}(\rvx,t)=\|g^j_i\|_2^2
   = \frac{\|\Delta s^j_t(\rvx,\vy_i;\theta_i)\|_2^2}
          {\|\Delta s_t(\rvx,\vy_i;\theta_i)\|_2^2}
   = \frac{\|s^j_t(\rvx,\vy_i;\theta_i)-s^j_t(\rvx;\theta_i)\|^2_2}
          {\|s_t(\rvx,\vy_i;\theta_i)-s_t(\rvx;\theta_i)\|^2_2}.
\]
The denominator is the full squared norm over all features and channels, so the utilities are normalized: $\sum_{j=1}^m u_{ij}(\rvx,t)=1$, since $\sum_{j=1}^m \|\Delta s^j_t\|_2^2 = \|\Delta s_t\|_2^2$. This shows that the utility for each coordinate is proportional to the squared difference between the conditional and unconditional scores at that coordinate, normalized by the total squared norm of the score difference across all coordinates. The key insight is that while the gradient of the posterior density includes the posterior probability $q_t(\vy_i | \rvx; \theta_i)$ as a factor, this cancels out when we normalize, leaving only the score difference.

\subsection{Space and Time Analysis}
\label{app: time}
We analyze the computational cost of running Divide-and-Denoise. Our space and time analysis reveals practical techniques for mitigating the overhead of our method relative to baselines. Following the structure of Algorithm~\ref{alg:coord_denoise}, each iteration of generation decomposes into three stages: (i) the \emph{Proposal} stage, in which we run a forward pass for each pre-trained diffusion model, and aggregate the per-model denoising kernels $\{p^i_t\}$ and utilities $\{u_{ij}\}$; (ii) the \emph{Division} stage, in which we solve the convex fair-division problem of equation~\ref{eq: dual problem main} for the optimal allocation $Q_t$; (iii) the \emph{Denoising} stage, in which we form the composite denoising kernel $p^c_t$ and draw the next sample $x_{t-1}$.

\paragraph{Stage-wise Breakdown.} Table~\ref{tab:time_space} reports the fraction of total wall-clock time and total CPU memory consumed by each stage of running Divide-and-Denoise on a single AWS EC2 G6e instance. We average wall-clock time and memory usage over many runs with each run generating a single sample (i.e. batch size = $1$). We compare how space and time scale with the number of coordinated models from $n=2$ to $n=4$. We observe two trends. First, the Division stage is the dominant bottleneck as $n$ grows: its share of wall-clock time rises from roughly 12\% at $n=2$ to 57\% at $n=4$, and it accounts for the vast majority of CPU memory across all settings (82\% at $n=2$, growing to 96\% at $n=4$). The high memory footprint of the Division stage reflects the use of an off-the-shelf convex solver that runs on CPU to solve for the optimal allocation. The Proposal and Denoising stages execute entirely on GPU and contribute little to CPU memory usage. Second, the Denoising stage is the most expensive stage at $n=2$ but its relative share decreases as more models are added. In other words, computing the guidance gradient $\nabla_{x_t} U_t(x_t, Q)$ is more costly than computing the allocation with only two models. The Proposal stage, despite running one diffusion forward pass per model, scales gracefully because each forward pass can be parallelized on the GPU.

\begin{table*}[ht]
\vspace{-1pt}
\centering
\small 
\caption{Breakdown of wall-clock time and memory usage when coordinating 2, 3 and 4 models conditioned on objects using Stable Diffusion.}
\begin{tabular}{@{}ccccccc@{}}
\toprule
\multirow{2}{*}{Players}
  & \multicolumn{3}{c}{Fraction of Total Generation Time (\%)} 
  & \multicolumn{3}{c}{Fraction of Total CPU Memory (\%)} \\
\cmidrule(lr){2-4}\cmidrule(lr){5-7}
& proposal & division & denoising & proposal & division & denoising \\
\midrule\midrule
2 & 0.404$\pm$0.06 & 0.118$\pm$0.13 & 0.462$\pm$0.07 & 0.170 $\pm$ 0.065 & 0.820 $\pm$ 0.069 & 0.013 $\pm$ 0.011 \\
3 & 0.298$\pm$0.09 & 0.300$\pm$0.23 & 0.382$\pm$0.12 & 0.075 $\pm$ 0.064 & 0.918 $\pm$ 0.070 & 0.007 $\pm$ 0.009 \\
4 & 0.174$\pm$0.10 & 0.569$\pm$0.25 & 0.244$\pm$0.14 & 0.031 $\pm$ 0.042 & 0.964 $\pm$ 0.047 & 0.004 $\pm$0.006 \\
\midrule
\bottomrule
\end{tabular}
\label{tab:time_space}
\vspace{-5pt}
\end{table*}

\paragraph{Runtime Optimization.} The stage-wise breakdown suggests that optimizing the Division stage of Divide-and-Denoise would be most useful. We observe empirically that the allocation $Q_t$ is established early in the denoising trajectory (see Figure~\ref{fig:6}). We also observe that the guidance gradient minimally changes the image at later steps of generation. This motivates a simple schedule: (i) restrict the Division stage to the first $k_1$ denoising steps, freezing $Q_t$ afterwards and (ii) restrict applying the guidance term in the Denoising stage to the first $k_2$ steps, disabling guidance thereafter. Beyond steps $k_1$ and $k_2$, the composite denoising kernel reduces to the much cheaper averaging update $\mu^c_t = \sum_{i=1}^n \mu^i_t(x_t) \odot Q^i$ with the most recent allocation.

Table~\ref{table:wallclock} reports wall-clock time in milliseconds per iteration of generation under a specific $(k_1, k_2)$ schedule for $T=50$ DDIM steps. We compare to MultiDiffusion, the strongest naive composition baseline. Setting $k_1=15$ approximately halves the Division-stage cost at $n=3$ and reduces it by roughly $2{\times}$ at $n=4$, while setting $k_2=15$ similarly reduces the per-step Denoising cost. Applying both of these optimizations, the total per-iteration time of Divide-and-Denoise at $n=4$ drops from $741.8$\,ms to $379.4$\,ms -- roughly half the unmitigated cost and within $4{\times}$ of the baseline. Interestingly, the computational overhead of projecting onto the fair allocation set is substantially larger when no guidance is used. This occurs because explicitly steering the models toward a fair and efficient division encourages better separation of interests, often leading to subsequent allocations already being fair. Without guidance, we find that fairness often needs to be imposed at every step during generation.

\begin{table*}[ht]
\vspace{-1pt}
\centering
\small 
\caption{Breakdown of wall-clock time per iteration of generation process when coordinating 2, 3 and 4 models conditioned on objects using Stable Diffusion.}
\begin{tabular}{@{}clccccc@{}}
\toprule
\multirow{2}{*}{Players} & \multirow{2}{*}{Coordination Strategy}
  & \multicolumn{4}{c}{Wall-Clock Time Per Iteration (ms) $\downarrow$} \\
\cmidrule(lr){3-6}
& & proposal & division & denoising & total \\
\midrule\midrule
\multirow{5}{*}{2} 
&  MultiDiffusion                  & --   & --                    & --                  & $78.06\pm15.44$ \\
&  Ours $(k_1=50,k_2=50)$   & $68.18\pm0.93$    & $29.36\pm67.54$       & $77.88\pm0.218$     & $175.42\pm67.55$ \\
&  Ours $(k_1=15,k_2=50)$   & $68.06\pm0.793$   & $16.38\pm20.75$       & $77.82\pm0.24$      & $162.26\pm20.77$ \\
&  Ours $(k_1=50,k_2=15)$   & $64.62\pm0.883$   & $104.09\pm137.21$     & $23.41\pm0.198$     & $192.12\pm137.21$ \\
&  Ours $(k_1=15,k_2=15)$   & $65.12\pm0.738$   & $17.36\pm21.83$       & $23.72\pm0.780$     & $106.20\pm21.86$ \\
\midrule
\multirow{5}{*}{3}
&  MultiDiffusion                  & --  & --                    & --                  & $85.03\pm21.81$ \\
&  Ours $(k_1=50,k_2=50)$   & $79.67\pm0.95$    & $131.72\pm175.9$      & $101.97\pm0.258$    & $313.36\pm175.90$ \\
&  Ours $(k_1=15,k_2=50)$   & $80.9\pm0.851$    & $72.41\pm70.11$       & $102.56\pm0.278$    & $255.87\pm70.12$ \\
&  Ours $(k_1=50,k_2=15)$   & $76.05\pm0.944$   & $277.25\pm270.08$     & $30.88\pm0.176$     & $384.18\pm270.08$ \\
&  Ours $(k_1=15,k_2=15)$   & $79.54\pm0.962$   & $71.46\pm68.37$       & $30.66\pm0.178$     & $181.66\pm68.38$ \\
\midrule
\multirow{5}{*}{4}
&  MultiDiffusion                  & --   & --                    & --                  & $92.58\pm24.26$ \\
&  Ours $(k_1=50,k_2=50)$   & $90.82\pm0.922$   & $524.54\pm450.32$     & $126.48\pm0.263$    & $741.84\pm450.32$ \\
&  Ours $(k_1=15,k_2=50)$   & $90.6\pm1.02$     & $255.21\pm143.48$     & $126.56\pm0.251$    & $472.37\pm143.48$ \\
&  Ours $(k_1=50,k_2=15)$   & $86.03\pm1.04$    & $746.9\pm500.4$       & $38.1\pm0.179$      & $871.03\pm500.40$ \\
&  Ours $(k_1=15,k_2=15)$   & $86.18\pm0.816$   & $255.06\pm144.752$    & $38.13\pm0.171$     & $379.37\pm144.75$ \\
\midrule
\bottomrule
\end{tabular}
\label{table:wallclock}
\vspace{-5pt}
\end{table*}

\paragraph{Effect on Task Performance.} A natural concern is whether early truncation of the Division and Denoising stages degrades sample quality. Table~\ref{table:performancek1k2}  reports performance on the "Concepts as Objects" setting of Section~\ref{sec: experiments} under matched $(k_1, k_2)$ configurations, keeping all other hyperparameters fixed. Truncating the division schedule to $k_1=15$ has only a slight effect across all metrics, consistent with the observation that allocations are primarily determined early in generation. Truncating the guidance schedule to $k_2=15$ has a small effect at $n=2$, but a more pronounced effect at $n=3$. This reflects the fact that alignment must be enforced explicitly and consistently as the number of models--and therefore the degree of conflict--increases. However, even the most aggressive setting $(k_1, k_2) = (15, 15)$ continues to outperform the strongest baselines (Multi-Concept Diffusion and MultiDiffusion) by large and consistent margins. For example, at $n=3$, Divide-and-Denoise with $(15, 15)$ achieves $48.5\%$ GenEval \%images versus $14.0$--$14.8\%$ for the baselines. 

\begin{table*}[ht]
\centering
\small 
\caption{Performance of Divide-and-Denoise when coordinating 2 and 3 models conditioned on objects using Stable Diffusion.}
\begin{tabular}{@{}clcccccccc@{}}
\toprule
\multirow{2}{*}{Players} & \multirow{2}{*}{Coordination Strategy}
  & \multicolumn{2}{c}{GenEval $\uparrow$} 
  & \multicolumn{2}{c}{CLIP $\uparrow$} 
  & \multicolumn{2}{c}{Reward $\uparrow$} 
  & \multicolumn{2}{c}{VQA $\uparrow$} \\
\cmidrule(lr){3-4}\cmidrule(lr){5-6}\cmidrule(lr){7-8}\cmidrule(lr){9-10}
& & \%images & \%prompts & joint & min & joint & min & joint & min \\
\midrule\midrule
\multirow{6}{*}{2}
& Multi-Concept Diffusion & 53.75\% & 86.00\% & 27.05 & 18.77 & 0.28 & -1.15 & 0.753 & 0.683 \\
& MultiDiffusion & 58.00\% & 93.00\% & 27.65 & 19.59 & 0.34 & -0.99 & 0.816 & 0.738 \\
& Ours $(k_1=50,k_2=50)$ & \textbf{88.50\%} & \textbf{99.00\%} & \textbf{30.02} & 21.53 & \textbf{1.23} & \textbf{-0.38} & \textbf{0.960} & \textbf{0.925} \\
& Ours $(k_1=15, k_2=50)$ & 86.25\% & 99.00\% & 29.91 & \textbf{21.54} & 1.22 & -0.39 & 0.958 & 0.922 \\
& Ours $(k_1=50, k_2=15)$ & 85.25\% & 99.00\% & 29.79 & 21.38 & 1.18 & -0.42 & 0.953 & 0.910 \\
& Ours $(k_1=15, k_2=15)$ & 83.25\% & 99.00\% & 29.77 & 21.36 & 1.15 & -0.44 & 0.953 & 0.910 \\
\midrule
\multirow{6}{*}{3}
& Multi-Concept Diffusion & 14.75\% & 43.00\% & 28.45 & 15.15 & -0.14 & -1.82 & 0.532 & 0.384 \\
& MultiDiffusion & 14.00\% & 37.00\% & 28.05 & 16.02 & -0.48 & -1.79 & 0.537 & 0.374 \\
& Ours $(k_1=50,k_2=50)$ & \textbf{59.50\%} & \textbf{92.00\%} & \textbf{33.21} & \textbf{19.09} & \textbf{1.22} & \textbf{-0.79} & \textbf{0.921} & \textbf{0.829} \\
& Ours $(k_1=15,k_2=50)$ & 58.00\% & 89.00\% & 32.91 & 19.01 & 1.15 & -0.83 & 0.906 & 0.812 \\
& Ours $(k_1=50, k_2=15)$ & 50.00\% & 88.00\% & 32.53 & 18.60 & 1.07 & -0.90 & 0.864 & 0.761 \\
& Ours $(k_1=15, k_2=15)$ & 48.50\% & 86.00\% & 32.34 & 18.47 & 0.999 & -0.86 & 0.846 & 0.737 \\
\midrule\bottomrule
\end{tabular}
\label{table:performancek1k2}
\vspace{-5pt}
\end{table*}

\paragraph{Further opportunities.} The remaining cost of the Division stage is dominated by calls to the convex solver. Further speedups are possible by using a faster first-order solver, relaxing the convergence tolerance, fixing the number of solver iterations, or warm-starting from $\lambda^*_{t+1}$ to exploit temporal smoothness in the dual variable. We leave a systematic exploration of these solver-level optimizations to future work.

\subsection{Hyperparameter Selection}
\label{app: hyperparams}

In this section, we analyze the effect of hyperparameters and the sensitivity of our method. Divide-and-Denoise has two main hyperparameters: $\beta_t$, the regularization weight in the division problem, and $\alpha_t$, the regularization weight in the compositional update. We use a constant $\beta_t = \beta$ throughout generation, and we reparameterize $\alpha_t$ via a single scalar $\gamma$ that controls the guidance strength:
\[
\alpha_t = \frac{\sigma_t}{\gamma }\left\| \nabla_{x_{t}} U_{t}(x_{t}, Q)\right\|.\]

We find it advantageous to match the guidance magnitude to the noise level, as overly strong guidance can drive intermediate latents off the training manifold. For a DDIM sampler with hyperparameter $\eta$, this corresponds to setting $\eta = \gamma$. Since DDIM typically performs best with $\eta$ near zero, we focus on small values of $\gamma$.

The limiting cases provide intuition for the role of each parameter.
As $\beta \to \infty$, the allocation becomes fixed and independent of the utilities. In combination with $\gamma = 0$, this recovers the Averaging baseline. Guidance is not effective when the allocation is nearly uniform, which motivates using $\beta$ in a lower range. In contrast, $\beta \approx 0$ yields an unregularized allocation determined solely by the current utilities. It can cause numerical instability and is sensitive to the noise, which suggests that larger $\beta$ values are preferable for noisy score-based utilities. Setting $\gamma = 0$ disables guidance, while excessively large $\gamma$ leads to off-manifold updates and artifacts.

We evaluate hyperparameter sensitivity on the same experimental setup as in Section~\ref{sec: concepts as objects} with $n = 2$ players and Stable Diffusion. Table~\ref{table: hyperparameter} summarizes the results. In each experiment, we vary either $\gamma$ or $\beta$ while keeping the other parameter fixed to its default value. The default configuration with $\gamma=0.015$ and $\beta=0.001$ was hand-picked based on preliminary experiments. 

\begin{table}[h!]
\centering
\small 
\caption{Performance of Divide-and-Denoise on coordinating 2 models conditioned on different concepts under varying hyperparameters.}
\begin{tabular}{l c c c c c c c c}
\toprule \multirow{2}{*}{Changed Parameter}
 & \multicolumn{2}{c}{GenEval $\uparrow$} & \multicolumn{2}{c}{CLIP $\uparrow$} & \multicolumn{2}{c}{ImageReward $\uparrow$} & \multicolumn{2}{c}{VQA $\uparrow$}\\
\cmidrule(lr){2-3} \cmidrule(lr){4-5} \cmidrule(lr){6-7} \cmidrule(lr){8-9} 
 & \%images & \%prompts & joint & min & joint & min & joint & min \\
\midrule\midrule	
default: $\gamma  = 0.015, \beta=0.001$ & 88.50\% & 99\% & 30.02 & 21.53 & 1.23 & -0.38 & 0.960 & 0.925\\
\midrule
$\gamma  = 0$ &  62.25\% & 91\% & 28.41 & 20.50 & 0.57 & -0.80 & 0.905 & 0.835\\
$\gamma  = 0.01$ &  87.50\% & 99\% & 29.77 & 21.39 & 1.15 & -0.43 & 0.954 & 0.917 \\
$\gamma  = 0.02$ &  88.50\% & 99\% & 30.02 & 21.57 & 1.25 & -0.37 & 0.963 & 0.931 \\
$\gamma  = 0.05$ &  88.25\% & 100\% & 30.10 & 21.76 & 1.18 & -0.42 & 0.956 & 0.923 \\
$\gamma  = 0.1$ &  78.75\% & 99\% & 29.95 & 21.89 & 0.86 & -0.71 & 0.927 & 0.886 \\
\midrule
$\beta = 0.0005$ &  89.00\% & 99\% & 30.01 & 21.50 & 1.24 & -0.39 & 0.960 & 0.928 \\
$\beta = 0.002$ &  88.00\% & 99\% & 29.96 & 21.57 & 1.23 & -0.38 & 0.965 & 0.929\\
$\beta = 0.01$ &  84.00\% & 97\% & 29.82 & 21.51 & 1.31 & -0.49 & 0.960 & 0.921 \\
\bottomrule
\end{tabular}
\label{table: hyperparameter}
\end{table}

Even without guidance ($\gamma = 0$), Divide-and-Denoise consistently outperforms the baselines, highlighting the benefits of using dynamic allocation. Increasing $\gamma$ initially improves performance, but beyond a certain point performance degrades due to artifacts caused by out-of-manifold updates. 
Overall, performance is fairly robust to small perturbations around the optimal ranges.

\subsection{Additional Experimental Results}
\label{app: fairness}
In addition to the experiments in section~\ref{sec: experiments}, we perform several supplementary tests. First, we employ our standard GenEval setup with $2$ Stable Diffusion models conditioned on different objects, but this time we use score-based utilities instead of attention-based ones. Results are presented in Table~\ref{table: score geneval}. We notice that although this choice of utility significantly decreases performance compared to attention-based one, Divide-and-Denoise still outperforms the first three baselines including Multi-Concept Diffusion. However, comparable results to MultiDiffusion suggest that scores may provide insufficient information about concept localization.
\label{app: additional res}
\begin{table}[h!]
\centering
\small 
\caption{Performance of Divide-and-Denoise on coordinating 2 models conditioned on different concepts with score-based utilities on GenEval.}
\begin{tabular}{l c c c c c c c c}
\toprule \multirow{2}{*}{Coordination Strategy}
 & \multicolumn{2}{c}{GenEval $\uparrow$} & \multicolumn{2}{c}{CLIP $\uparrow$} & \multicolumn{2}{c}{ImageReward $\uparrow$} & \multicolumn{2}{c}{VQA $\uparrow$} \\
\cmidrule(lr){2-3} \cmidrule(lr){4-5} \cmidrule(lr){6-7} \cmidrule(lr){8-9} 
 & \%images & \%prompts & joint & min & joint & min & joint & min  \\
\midrule\midrule	
Averaging & 31.25\% & 59.00\% & 26.26 & 18.64 & -0.49 & -1.46 & 0.720 & 0.610 \\
Composable Diffusion & 36.50\% & 67.00\% & 26.85 & 18.92 & -0.26 & -1.30 & 0.749 & 0.643 \\
Multi-Concept Diffusion & 53.75\% & 86.00\% & 27.05 & 18.77 & 0.28 & -1.15 & 0.753 & 0.683 \\
MultiDiffusion & 58.00\% & \textbf{93.00\%} & 27.65 & 19.59 & \textbf{0.34} & \textbf{-0.99} & 0.816 & 0.738 \\
\midrule
Ours (without fairness) & 
45.00\% & 85.00\% & 27.29 & 19.82 & -0.08 & -1.23 & 0.808 & 0.715 \\
Ours (with proportional fairness) & \textbf{59.00\%} & 91.00\% & \textbf{28.13} & \textbf{20.38} & 0.26 & -1.01 & \textbf{0.863} & \textbf{0.783} \\
\bottomrule
\end{tabular}
\label{table: score geneval}
\end{table}

Moreover, we analyze how the performance of Divide-and-Denoise is affected by the choice of the fairness constraints. Across all tasks for the Stable Diffusion setup, we compare our method without any regularization (Efficient Allocation), with only proportional constraints (Efficient + Proportional Allocation), with proportional and equitable constraints (Efficient + Proportional + Equitable Allocation), and finally with proportional and envy-free constraints (Efficient + Proportional + Envy-Free Allocation). Note that we only use last option when coordinating $3$ models, since in $2$ players' case any proportional allocation is also envy-free. We report all metrics in Table~\ref{table: full res}.
\begin{table*}[h]
\centering
\small 
\caption{Performance of Divide-and-Denoise in coordinating 2 and 3 models on different concepts under varying fairness criteria.}
\begin{tabular}{@{}clcccccccc@{}}
\toprule
\multirow{2}{*}{Task} & \multirow{2}{*}{Allocation}
  & \multicolumn{2}{c}{GenEval $\uparrow$} 
  & \multicolumn{2}{c}{CLIP $\uparrow$} 
  & \multicolumn{2}{c}{Reward $\uparrow$} 
  & \multicolumn{2}{c}{VQA $\uparrow$} \\
\cmidrule(lr){3-4}\cmidrule(lr){5-6}\cmidrule(lr){7-8}\cmidrule(lr){9-10}
& & \%images & \%prompts & joint & min & joint & min & joint & min \\
\midrule\midrule
\multirow{3}{*}{2 objects} 
&Efficient& 87.00\% & 98.00\% & 29.91 & 21.59 & 1.16 & -0.42 & 0.959 & 0.921 \\
&Efficient + Proportional & \textbf{88.50\%} & \textbf{99.00\%} & 30.02 & 21.53 & \textbf{1.23} & \textbf{-0.38} & \textbf{0.960} & \textbf{0.925} \\
&Efficient + Proportional + Equitable & 87.00\% & 98.00\% & \textbf{30.08} & \textbf{21.67} & 1.20 & -0.41 & \textbf{0.960} & 0.921 \\
\midrule
\multirow{4}{*}{3 objects} 
 & Efficient  &  51.75\% & 88.00\% & 32.68 & 18.96 & 1.05 & -0.92 & 0.872 & 0.773 \\
 & Efficient + Proportional &  59.50\% & 92.00\% & 33.21 & 19.09 & 1.22 & -0.79 & \textbf{0.921} & 0.829 \\
 & Efficient + Proportional + Equitable &   \textbf{63.00\%} & \textbf{94.00\%} & \textbf{33.24} & \textbf{19.35} & \textbf{1.29} & \textbf{-0.74} & 0.919 & \textbf{0.834} \\
 & Efficient + Proportional + Envy-Free &   60.25\% & 92.00\% & 33.17 & 19.05 & 1.24 & -0.77 & 0.915 & 0.824 \\
 \midrule
\multirow{3}{*}{\shortstack{2 objects\\with color}} 
&Efficient& 52.50\% & \textbf{86.00\%} & 32.42 & 22.53 & 1.22 & -0.66 & 0.869 & 0.791 \\
&Efficient + Proportional & \textbf{55.75\%} & \textbf{86.00\%} & \textbf{32.65} & \textbf{22.62} & \textbf{1.34} & \textbf{-0.56} & \textbf{0.882} & \textbf{0.806} \\
&Efficient + Proportional + Equitable & 54.00\% & 85.00\% & \textbf{32.65} & \textbf{22.62} & 1.31 & \textbf{-0.56}  & \textbf{0.882} & 0.804 \\
\midrule
\multirow{4}{*}{\shortstack{concepts\\with\\ conflict}} 
 & Efficient  &  -& -& 30.99 & 21.01 & 1.05 & -0.60 & 0.896 & 0.796 \\
 & Efficient + Proportional &  -& -& 31.13 & 21.23 & 1.12 & -0.55 & \textbf{0.905} & 0.815 \\
 & Efficient + Proportional + Equitable &  -& -& \textbf{31.36} & \textbf{21.46} & \textbf{1.15} & \textbf{-0.52} & 0.904 & \textbf{0.819} \\
 & Efficient + Proportional + Envy-Free &  -& -& 31.19 & 21.26 & 1.12 & -0.54 & \textbf{0.905} & \textbf{0.819} \\
\midrule
\bottomrule
\label{table: full res}
\end{tabular}
\end{table*}

\subsection{Custom Dataset}
\label{app:custom}
We constructed a custom dataset comprising $40$ examples, each designed with conflicts between individual prompts. Specifically, the first $30$ prompts involve either object + semantics or semantics + semantics compositions: $10$ focus on conflicting attributes (e.g., “an image with a blue lake” and “an image with violet trees”), while the remaining $20$ capture conflicting semantic combinations (e.g., “an image with a desert” and “an image with a snowy mountain”). The last 10 prompts are of the form semantics + semantics + object, where at least one pairing is conflicting.

\paragraph{Conflicting Attributes (10 prompts):}
\begin{enumerate}
\item \texttt{an image with a blue lake}, \texttt{an image with violet trees}
\item \texttt{an image with a green car}, \texttt{an image with a pink forest}
\item \texttt{an image with a yellow elephant}, \texttt{an image with a grey desert}
\item \texttt{an image with a rainbow-colored dog}, \texttt{an image with a black-and-white city}
\item \texttt{an image with a brown flamingo}, \texttt{an image with a purple swamp}
\item \texttt{an image with a transparent car}, \texttt{an image with a glowing forest}
\item \texttt{an image with a golden cloud}, \texttt{an image with a black ocean}
\item \texttt{an image with a white bus}, \texttt{an image with a bright orange snowfield}
\item \texttt{an image with a blue cat}, \texttt{an image with a black sofa}
\item \texttt{an image with a red eagle}, \texttt{an image with a grey sky}
\end{enumerate}

\paragraph{Conflicting Semantic Combinations (20 prompts):}
\begin{enumerate}
\setcounter{enumi}{10}
\item \texttt{an image with a desert}, \texttt{an image with a snowy mountain}
\item \texttt{an image with a jungle}, \texttt{an image with an icy glacier}
\item \texttt{an image with a burning forest}, \texttt{an image with a frozen river}
\item \texttt{an image with a tropical beach}, \texttt{an image with a volcanic eruption}
\item \texttt{an image with a futuristic city}, \texttt{an image with a medieval castle}
\item \texttt{an image with a stormy sky}, \texttt{an image with a calm lake}
\item \texttt{an image with a carnival}, \texttt{an image with a haunted graveyard}
\item \texttt{an image with an underwater city}, \texttt{an image with a floating island}
\item \texttt{an image with a sunny meadow}, \texttt{an image with a meteor shower}
\item \texttt{an image with a winter tundra}, \texttt{an image with a blooming spring forest}
\item \texttt{an image with snowy mountains}, \texttt{an image with a camel}
\item \texttt{an image with a rainforest}, \texttt{an image with a penguin}
\item \texttt{an image with a rocky cliffside}, \texttt{an image with a telephone booth}
\item \texttt{an image with an iceberg}, \texttt{an image with a windmill}
\item \texttt{an image with a busy highway}, \texttt{an image with a deer}
\item \texttt{an image with a street}, \texttt{an image with a jaguar}
\item \texttt{an image with a blizzard}, \texttt{an image with a giraffe}
\item \texttt{an image with a desert oasis}, \texttt{an image with a moose}
\item \texttt{an image with a rice field}, \texttt{an image with a Ferris wheel}
\item \texttt{an image with a savanna}, \texttt{an image with a skyscraper}
\end{enumerate}

\paragraph{Triple Combinations (10 prompts):}
\begin{enumerate}
\setcounter{enumi}{30}
\item \texttt{an image with a desert canyon}, \texttt{an image with snowy peaks}, \texttt{an image with a tropical parrot}
\item \texttt{an image with a modern city skyline}, \texttt{an image with a rural farm}, \texttt{an image with a horse}
\item \texttt{an image with a sunflower field}, \texttt{an image with snowy mountains}, \texttt{an image with a polar bear}
\item \texttt{an image with a grassy soccer field}, \texttt{an image with volcanic ash clouds}, \texttt{an image with a motorcycle}
\item \texttt{an image with a frozen lake}, \texttt{an image with a tropical beach}, \texttt{an image with a palm tree}
\item \texttt{an image with an iceberg}, \texttt{an image with stormy skies}, \texttt{an image with a cow}
\item \texttt{an image with a grassy meadow}, \texttt{an image with a tropical sun}, \texttt{an image with a snowman}
\item \texttt{an image with a sandy beach}, \texttt{an image with the aurora borealis}, \texttt{an image with an elephant}
\item \texttt{an image with a wheat field}, \texttt{an image with a futuristic glass dome city}, \texttt{an image with a steam train}
\item \texttt{an image with a rocky cliffside}, \texttt{an image with a rainbow sky}, \texttt{an image with a boat}
\end{enumerate}

\subsection{Additional Qualitative Results}
\label{app:additional}
We provide additional qualitative comparisons of Divide-and-Denoise with baselines. The experimental setups are described in Section~\ref{sec: experiments}. %

For the experiments using Stable Diffusion, the images are shown in Figures~\ref{fig:a4},~\ref{fig:a5}, ~\ref{fig:a6}, and~\ref{fig:a7}. Each row corresponds to one coordination mechanism: Averaging, Composable Diffusion, Multi-Concept Diffusion, MultiDiffusion and Divide-and-Denoise with fairness given by proportional constraints. Each column corresponds to a fixed set of concepts. For each combination of method and concept set, we generate a batch of $4$ images.

For the DiT setup, the images are shown in Figure~\ref{fig:imagenet_grid}. We present all pairs of ImageNet classes used for the quantitative analysis (see Figure~\ref{table: imagenet}) and plot images generated by Composable Diffusion and MultiDiffusion baselines alongside images generated by our method for the same random seed. Divide-and-Denoise avoids object overlap and blending of concepts in many cases where Composable Diffusion fails to reliably represent both classes. 

\begin{figure}
    \centering
    \includegraphics[width=0.95\linewidth]{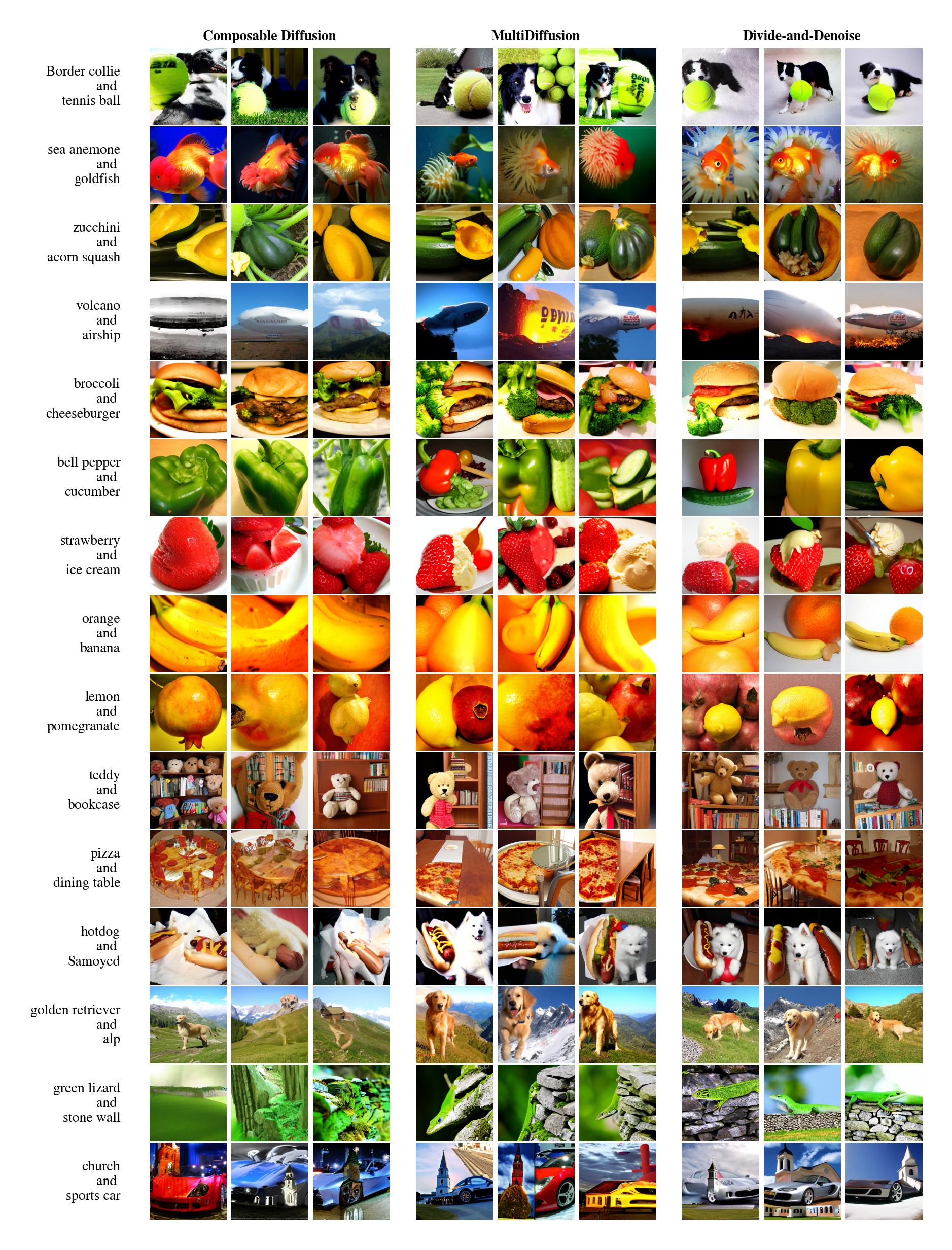}
    \caption{Comparison of Composable Diffusion (3 first columns), MultiDiffusion (3 middle columns) and Divide-and-Denoise (3 last columns) on a dataset of ImageNet pairs. For each pair of concepts, we show $3$ images generated with random seeds. All three models use the same seeds.}
    \label{fig:imagenet_grid}
\end{figure}

\begin{figure}
    \centering
    \includegraphics[width=1\linewidth]{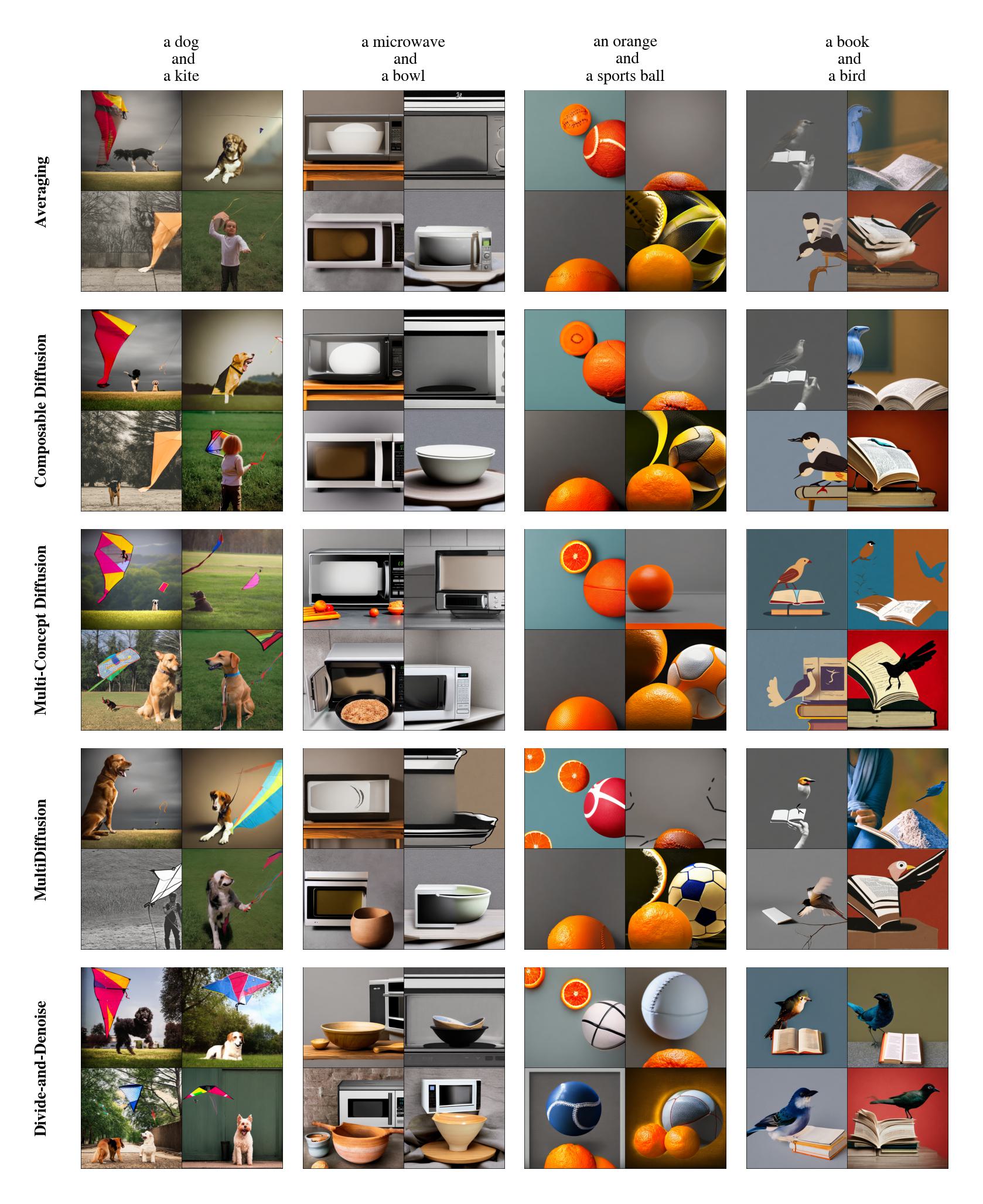}
    \caption{Qualitative comparison on GenEval benchmark (2 objects)}
    \label{fig:a4}
\end{figure}

\begin{figure}
    \centering
    \includegraphics[width=1\linewidth]{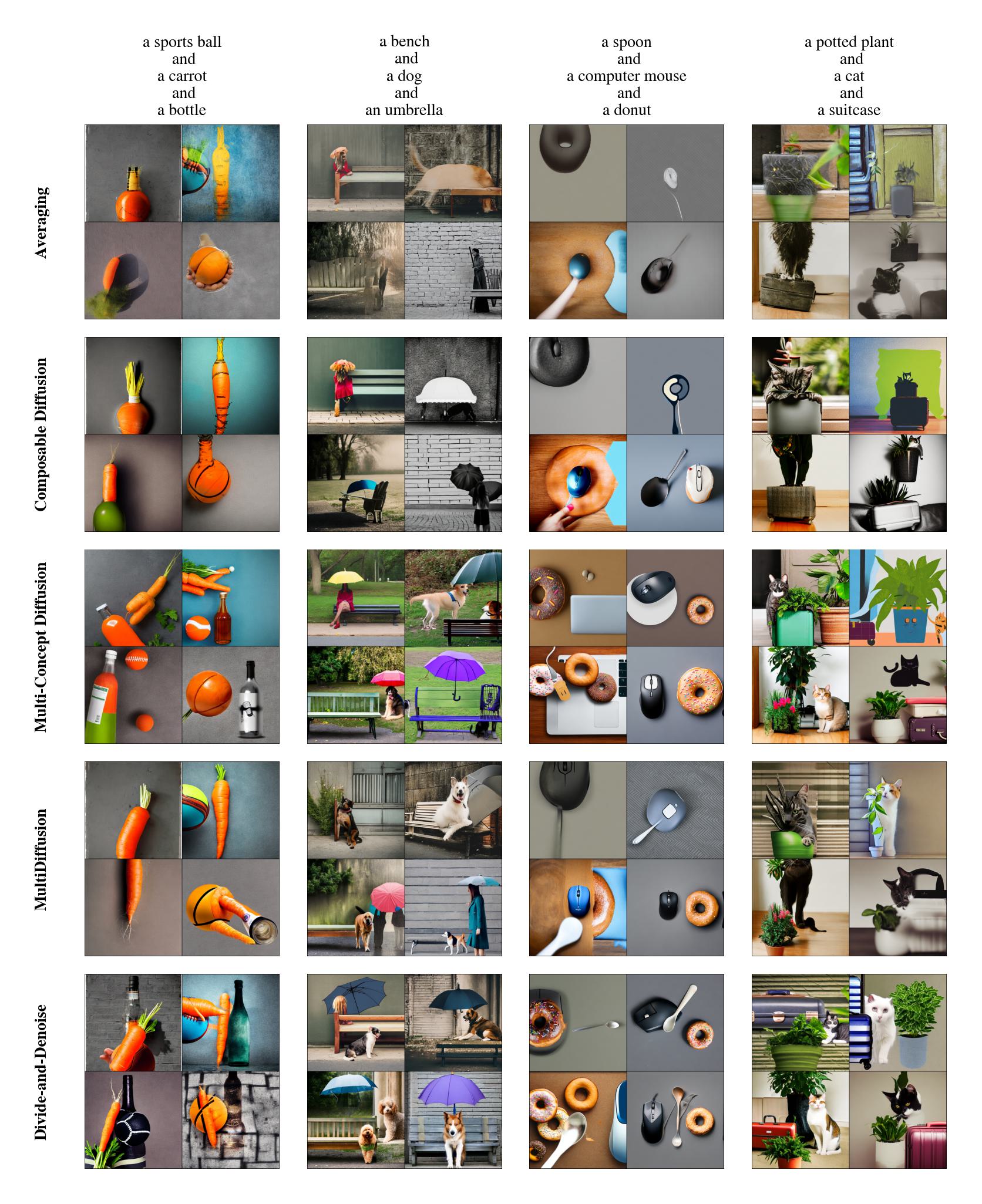}
    \caption{Qualitative comparison on GenEval benchmark (3 objects)}
    \label{fig:a5}
\end{figure}

\begin{figure}
    \centering
    \includegraphics[width=1\linewidth]{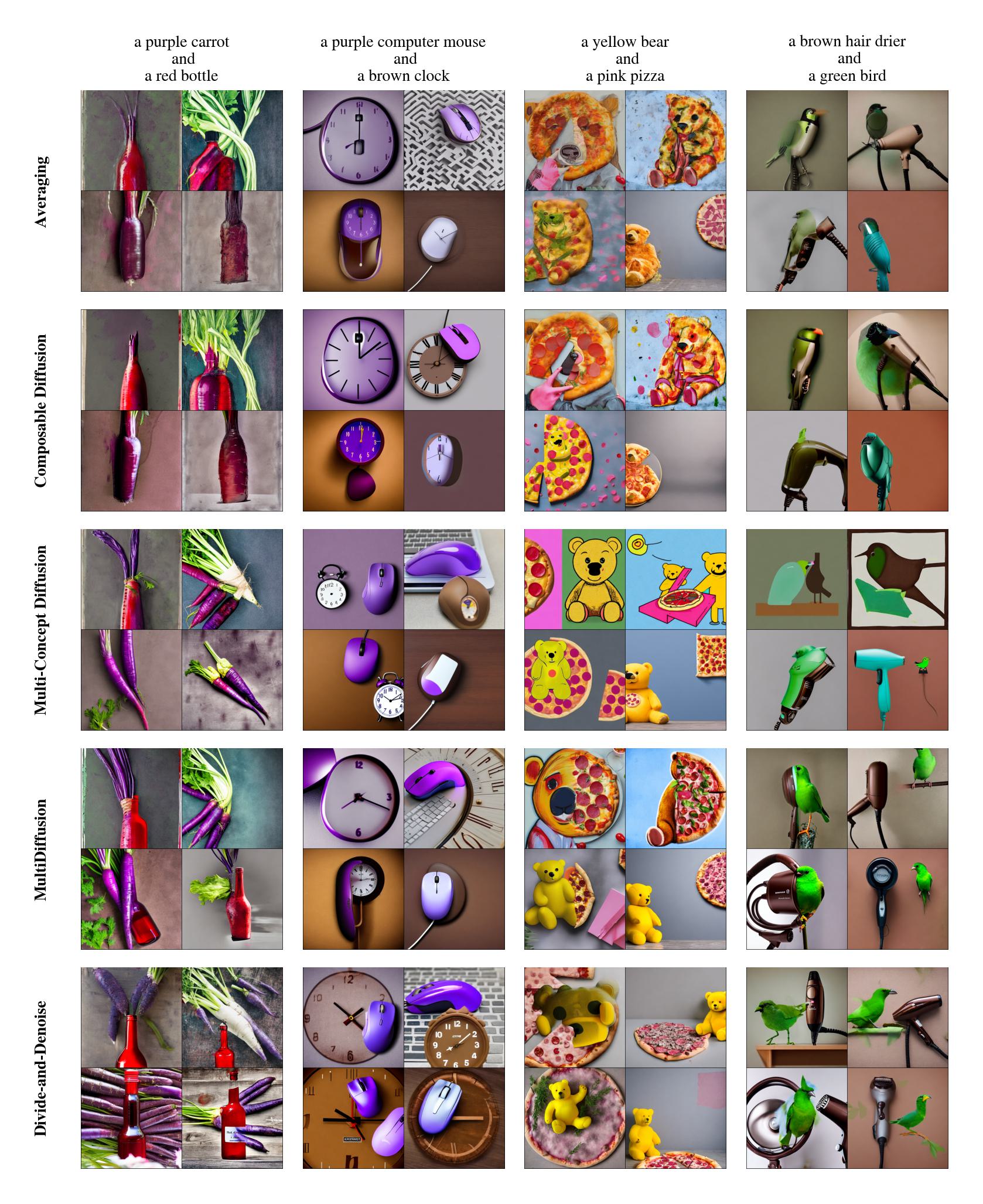}
    \caption{Qualitative comparison on GenEval benchmark (2 objects with color descriptions)}
    \label{fig:a6}
\end{figure}

\begin{figure}
    \centering
    \includegraphics[width=1\linewidth]{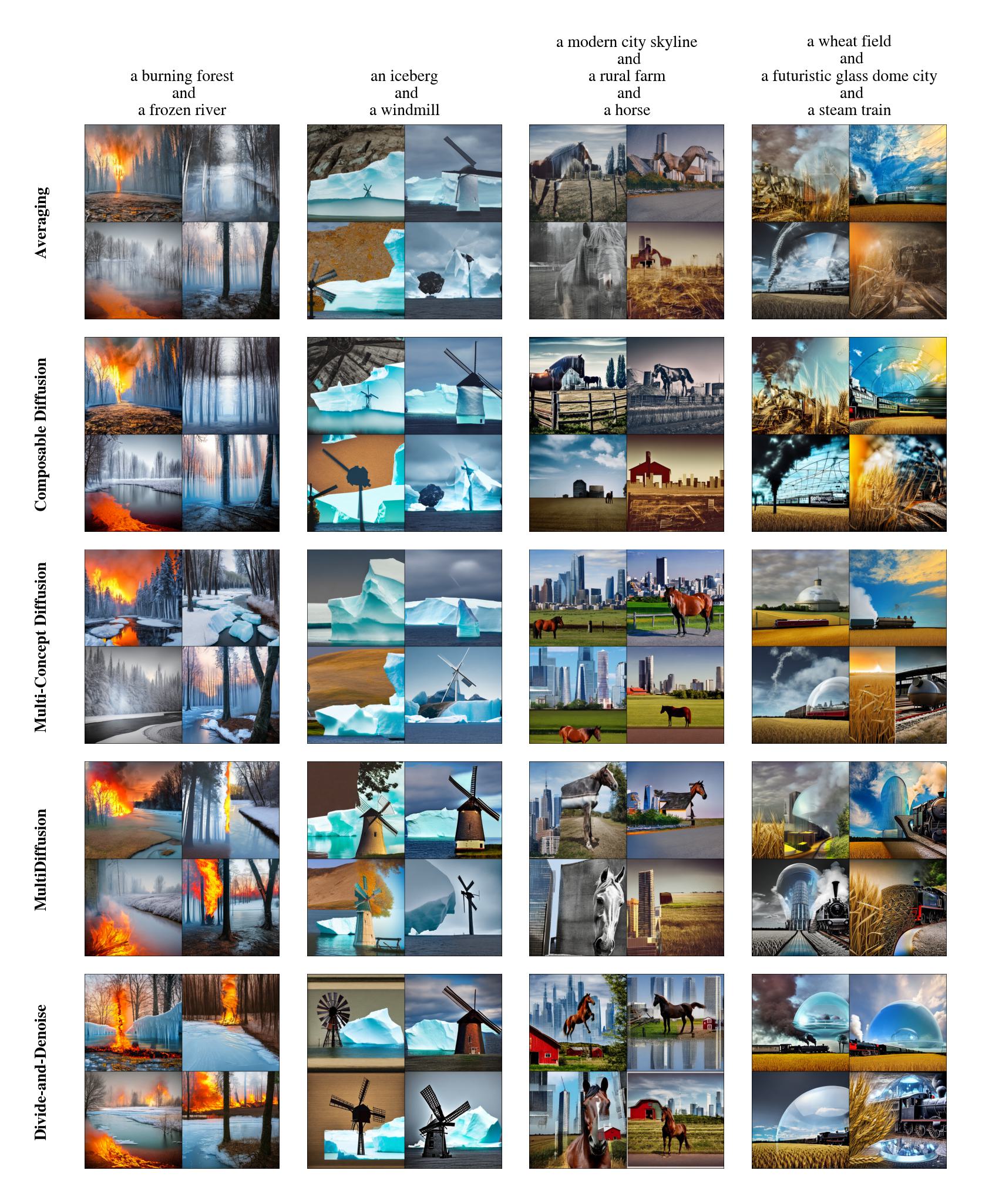}
    \caption{Qualitative comparison on Conflict dataset}
    \label{fig:a7}
\end{figure}

\end{document}